\def\1{\bm{1}}
\DeclareMathAlphabet{\mathsfit}{\encodingdefault}{\sfdefault}{m}{sl}
\SetMathAlphabet{\mathsfit}{bold}{\encodingdefault}{\sfdefault}{bx}{n}
\newcommand{\E}{\mathbb{E}}
\newcommand{\R}{\mathbb{R}}
\newcommand{\sigmoid}{\sigma}
\DeclareMathOperator*{\argmax}{arg\,max}
\numberwithin{equation}{section}
\def \argmax {\textrm{argmax}}
\def \be {\begin{eqnarray}}
\def \en {\end{eqnarray}}
\def \xv {\mathbf{x}}
\def \yv {\mathbf{y}}
\def \E {\mathds{E}}
\newcommand{\dom}{\mathop{\mathrm{dom}}}
\newcommand{\supp}{\mathop{\mathrm{supp}}}
\def \g {\gamma}
\def \hh {\hat{h}}
\newcommand{\Hc}{\mathcal{H}}
\newcommand{\Xc}{\mathcal{X}}
\def \labelingf{f}
\def \D {\Delta}
\def \one {\mathbf{1}}
\def \Ps{P_{\textrm{s}}} %
\def \Pt{P_{\textrm{t}}} %
\def \psdensity{p_{\textrm{s}}} %
\def \ptdensity{p_{\textrm{t}}} %
\def \pzsdensity{p_{\textrm{s}}^{\textrm{z}}} %
\def \pztdensity{p_{\textrm{t}}^{\textrm{z}}} %
\def \pzsdistribution{P_{\textrm{s}}^{\textrm{z}}} %
\def \pztdistribution{P_{\textrm{t}}^{\textrm{z}}} %
\def \hypot{\mathcal{H}}
\def \riskTlh{R^{\ell}_T(h)}
\def \riskSlh{R^{\ell}_S(h)}
\def \domainin{\mathcal{X}}
\def \domainout{\mathcal{Y}}
\def \sourcedataset{\textrm{S}}
\def \targetdataset{\textrm{T}}
\def \latentspace{\mathcal{Z}}
\def \hypotf{h}
\def \fdiv{\textrm{D}_{\phi}}
\def \fHdiscrepancy{\textrm{D}^{\phi}_{\hypot}}
\def \fhHdiscrepancy{\textrm{D}^{\phi}_{h,\hypot}}
\def \JShHdiscrepancy{\textrm{D}^{\textrm{JS}}_{h, \hypot}}
\def \TVHdiscrepancy{\textrm{D}^{\textrm{TV}}_{\hypot}}
\def \TVhHdiscrepancy{\textrm{D}^{\textrm{TV}}_{h, \hypot}}
\newcommand{\updated}[1]{ #1 }
\newtheorem{definition}{Definition}
\newtheorem{prop}{Proposition}
\newtheorem{lem}{Lemma}
\colorlet{shadecolor}{gray!30}
\icmltitlerunning{$f$-Domain-Adversarial Learning: Theory and Algorithms }
\begin{document}

\twocolumn[
\icmltitle{$f$-Domain-Adversarial Learning: Theory and Algorithms}

\begin{icmlauthorlist}
   \icmlauthor{David Acuna}{nv,to,vect}
   \icmlauthor{Guojun Zhang}{wo,vect}
   \icmlauthor{Marc T. Law}{nv}
   \icmlauthor{Sanja Fidler}{nv,to,vect}
\end{icmlauthorlist}

   \icmlaffiliation{nv}{NVIDIA}
   \icmlaffiliation{to}{University of Toronto}
   \icmlaffiliation{wo}{University of Waterloo}
   \icmlaffiliation{vect}{Vector Institute}

   \icmlcorrespondingauthor{David Acuna}{davidj@cs.toronto.edu,dacunamarrer@nvidia.com}

   \icmlkeywords{Machine Learning,Domain Adaptation,Domain Adversarial Learning, ICML}

\vskip 0.3in
]

\printAffiliationsAndNotice{}  %

\begin{abstract}
Unsupervised domain adaptation is used in many machine learning applications %
where, during training, a model
has access  
to unlabeled data in the target domain, and a related labeled dataset. 
In this paper, we introduce a novel and general domain-adversarial framework.
Specifically, we derive a novel generalization bound for domain adaptation that exploits a new measure of discrepancy between distributions based on a variational characterization of $f$-divergences. 
It recovers the theoretical results from \citet{ben2010theory} as a special case,
and supports divergences 
 used in practice. 
Based on this bound, we derive a new algorithmic framework that 
 introduces  a key correction in the original adversarial training method of \citet{ganin2016domain}. 
We show that many regularizers and ad-hoc objectives introduced over the last years in this framework are then not required to achieve performance comparable to (if not better than) state-of-the-art domain-adversarial  methods.
Experimental analysis conducted on real world natural language and computer vision datasets show that our framework outperforms existing baselines,  and obtains the best results for $f$-divergences that were not considered previously in domain-adversarial learning.
\end{abstract}

\section{Introduction}

The ability to learn new concepts from general-purpose data and transfer them to related but different contexts
is critical in  many modern applications. 
One such prominent scenario is called \textit{unsupervised domain adaptation}.
In domain adaptation, the learner has access to both a small (unlabeled) dataset on its domain of interest, and to a larger labeled dataset on a domain related to the target domain but with different distribution. %
The model is  trained with
both the labeled and unlabeled datasets, and  
it is
expected to generalize well to the target dataset if the gap between both domains is not very significant.

The paramount importance of domain adaptation (DA) has led to remarkable advances in the field.
From a theoretical point of view, 
~\cite{ben2007analysis,ben2010theory,david2010impossibility,mansour2009domain} 
provided generalization bounds for unsupervised DA based on discrepancy measures that are a reduction of the Total Variation (TV).
\citet{zhang19bridging} recently 
proposed the  Margin Disparity Discrepancy (MDD) with the aim of closing the gap between theory and algorithms. 
Their notion of discrepancy is tailored to  margin losses and builds on the observation 
of only taking a single supremum over the class set to make optimization easier.
Theories based on weighted
combination of hypotheses for multiple source DA have also been developed~\citep{hoffman2018algorithms}.

From an algorithmic perspective in the context of neural networks,~\citet{ganin2015unsupervised,ganin2016domain} proposed the idea of learning domain-invariant representations 
as an adversarial game.
This approach led to a plethora of methods 
 including state-of-the-art approaches 
 such as ~\citet{shu2018dirt, long2018conditional,hoffman2018cycada,zhang19bridging}. 
Although these methods were   
explained with insights from the theory of~\citet{ben2010theory}, and more recently through MDD~\citep{zhang19bridging}, 
both the $\hypot \Delta \hypot$ divergence 
 \citep{ben2010theory} 
and MDD are hard to optimize with deep neural networks.  \textit{Ad-hoc} objectives have thus been introduced to minimize the divergence between the source and target distributions in a common representation space. 
This has led to a disconnect between theory and the current SoTA practical methods. 
Specifically, the  domain-classifier from~\citet{ganin2016domain} that gives rise to domain-adversarial training methods is inspired by the proxy $\mathcal{A}$-distance from \citet{ben2007analysis} which itself is an approximation of the empirical estimation of the $\hypot \Delta \hypot$-divergence. 
It has been shown however that the discrepancy being minimized in practice in this framework corresponds to the JS-divergence \cite{ganin2015unsupervised}. 
Nonetheless, to the best of our knowledge, no clear connection between the DA theory and the algorithms that are typically employed has been made, i.e.~generalization bounds for DA with $f$-divergences have not been derived.

\textbf{Contributions.} 
In this paper, we derive a more general domain adaptation generalization bound  based on a variational characterization of $f$-divergences. 
These allow us to clearly  connect domain-adversarial training methods with the domain adaptation theory from an $f$-divergence minimization perspective. 
The theoretical results from \citet{ben2010theory} can be seen as a special case of our work for a specific choice of divergence. %
For the Jensen-Shannon (JS) divergence, we show how to rectify the original domain-adversarial training method from \citet{ganin2016domain}. 
Our analysis shows that after a key correction, many regularizers and ad-hoc objectives introduced in the DANN framework are not required to achieve performance comparable to (if not better than) state-of-the-art unsupervised domain adaptation methods that rely on adversarial learning. 
We also study how learning invariant representations for different choices of divergence affects the transfer performance on real-world datasets. 
In particular, the choice of the Pearson $\chi^2$ divergence is sufficient to outperform previous methods without additional techniques and/or additional hyperparameters.

\vspace{-3mm}
\section{Preliminaries}\label{sec:prem}
In this paper, we focus on the unsupervised domain adaptation task. During training, we assume that the learner has access to a source dataset %
of $n_s$ \emph{labeled} examples $\sourcedataset=\{(x^s_i, y^s_i)\}^{n_s}_{i=1}$, and a target dataset of 
$n_t$ \emph{unlabeled} examples $\targetdataset=\{(x^t_i)\}^{n_t}_{i=1}$, where the source datapoints $x^s_i$ are sampled i.i.d.~from a distribution $\Ps$ (source distribution) over the input space $\domainin$ and the target inputs $x^t_i$ are sampled i.i.d.~from a distribution $\Pt$ (target distribution) over $\domainin$. 
Usually, 
in the case of binary classification, we have $\domainout = \{0, 1\}$ and
in the multiclass classification  scenario,  $\domainout = \{1, . . . , k\}$. 
When the definition of $\domainin$ or $\domainout$ cannot be inferred from the context, we will mention it explicitly.

We denote a labeling function as $\labelingf : \domainin \to \domainout $, and use indices $\labelingf_s$ and $\labelingf_t$ to refer to the source and target labeling functions, respectively.
The task of unsupervised domain adaptation is to find a hypothesis function $\hypotf  : \domainin \to \domainout$ that generalizes to the target dataset $\targetdataset$ (i.e., to make as few errors as possible by comparing with the ground truth label $\labelingf_t (x_i^t)$).
 The risk of a hypothesis $\hypotf$ w.r.t. the labeling function $\labelingf$, using a loss function $\ell:\domainout \times \domainout \to \R_+$ under distribution $\mathcal{D}$ is defined as: $R^{\ell}_\mathcal{D}(h,\labelingf):=\E_{x \sim \mathcal{D}} [\ell(h(x),\labelingf(x))]$. We also assume that $\ell$ satisfies the triangle inequality.
For simplicity of notation, we define $R^{\ell}_S(h):=R^{\ell}_{P_s}(h,\labelingf_s)$ and $R^{\ell}_T(h):=R^{\ell}_{P_t}(h,\labelingf_t)$ where the indices $S$ and $T$ refer to the source and target domains, respectively.  %
In the stochastic scenario, we let the labeling function be the optimal Bayes classifier i.e $\labelingf (x)=\argmax_{\hat y \in \domainout  } P(y=\hat y |x)$ \cite{mohri2018foundations}. $P(y|x)$ denotes the class conditional distribution for either the source ($P_s(y|x)$) or the target domain ($P_t(y|x)$), respectively.
The empirical risks over the source dataset $\sourcedataset$ and the target dataset $\targetdataset$ are denoted by $\hat R_S$ and $\hat R_T$.

\textbf{Comparing domains with $f$-divergences.} 
A key component of domain adaptation is to study the discrepancy between the source and target distributions. 
In our work, 
we define new discrepancies between source and target distributions 
based on the variational characterization of popular choices of $f$-divergences.
Thus, we start by providing the definition of $f$-divergences.
\begin{definition} [\textbf{$f$-divergence, \citet{csiszar1967information, ali1966general}}] \label{def:fdivergence} 
Let $\Ps$ and $\Pt$ be two distribution functions with densities $\psdensity$ and $\ptdensity$, respectively. Let  $\psdensity$ be absolutely continuous w.r.t $\ptdensity$ and both be absolutely continuous with respect to a base measure $dx$. Let $\phi:\R_{+} \to \R$  be a convex, lower semi-continuous function %
that satisfies $\phi(1)=0$. The $f$-divergence $D_\phi$ is defined as:~
    \begin{equation} \label{eq:fdivergence}
  D_\phi(\Ps || \Pt)=\int \ptdensity(x) \  \phi\left( \frac{\psdensity(x)}{\ptdensity(x)} \right) dx.
  \end{equation}
\end{definition}
\begin{table*}[tb] 
\caption{  Popular $f$-divergences, their conjugate functions and choices of $a$. }
\begin{center}
\footnotesize
	\begin{tabular}{lllllll}
	\toprule
	\centering
	Divergence & $\phi(x)$  & Conjugate  $\phi^*(t)$ & $ \phi'(1) $& Activation func. $a(x)$ %
	\\ \midrule
		Kullback-Leibler (KL)
	& $x \log x$
	& $\exp(t-1)$
	& $1$
	& $x$
	\\
	Reverse KL (KL-rev)
	& $- \log x$
	& $-1-\log (-t)$
	& $-1$
	&  $-\exp x$
	\\
	Jensen-Shannon (JS)
	& $- (x+1) \log \frac{1+x}{2}+x \log x$
	& $- \log(2-e^t)$
	& $0$
	& $\log \frac{2}{1+\exp(-x)}$
	\\
	 Pearson $\chi^2$ 
	& $(x-1)^2$ 
	& $t^2/4 + t$ 
	& $0$ 
	& $x$
	\\
		Total Variation (TV) 
	& $\frac{1}{2}|x-1|$
	& $ \one_{-1/2\leq t \leq 1/2}$
	& $[-1/2, 1/2]$
	& $\frac{1}{2}\tanh x$\\
	\bottomrule

	\end{tabular}
\end{center}

\label{tbl:choices_f_diver}
\vspace{-3mm}
\end{table*}

\vspace{-2mm}
\textbf{Variational characterization of $f$-divergences.}  \citet{nguyen2010estimating} derive a general variational method that estimates $f$-divergences from  samples by turning the estimation problem into  variational optimization. They show that any  $f$-divergence can be written as (see details in \Cref{app:divergences}):
\begin{equation}
D_\phi(\Ps || \Pt) \geq \sup_{T \in \mathcal{T} } \E_{x\sim \Ps} [T(x)] - \E_{x\sim \Pt}[\phi^{*}(T(x))] \label{eq:sup_measurable_function}
\end{equation}

where $\phi^{*}$ is the (Fenchel) conjugate function of $\phi:\R_{+} \to \R$  defined as $\phi^*(y):=\sup_{x \in \R_{+}} \{ xy- \phi(x) \}$, and  $T: \domainin \to \dom \phi^*$. 
The equality holds if $\mathcal{T}$ is the set of all measurable functions. 
Many popular divergences that are heavily used in machine learning and information theory are special cases of $f$-divergences.  We summarize them and their conjugate function in Table~\ref{tbl:choices_f_diver}. 
For simplicity, we assume in the following that $\domainin \subseteq \R^n$ and each density (i.e $\psdensity$ and $\ptdensity$)
is absolutely continuous.

\vspace{-3mm}
\section{Discrepancies and Generalization Bounds}

Domain adaptation bounds generally build upon the idea of bounding the gap between the source and target domains' error functions in terms of the discrepancy between their probability distributions. 
 We first remind the reader of the seminal work of \citet{ben2010theory} that bounds the risk of any binary classifier in the hypothesis class $\hypot$ with the following theorem:
\begin{restatable}{thm}{TotalVar}  %
\label{theo:riskf_uninformative}

 If $\ell(x,y)=|h(x)-y|$ and $\hypot$ is a class of functions, then for any $h \in \hypot$ we have:
\begin{equation}
\begin{aligned}
&\riskTlh \leq  {\riskSlh} + D_{\rm TV}(\Ps\|\Pt) \\& \ + { \min \{ \E_{x \sim \Ps}[|\labelingf_t(x)-\labelingf_s(x)|],  \E_{x \sim \Pt}[|\labelingf_t(x)-\labelingf_s(x)|] \} }.
\end{aligned}
\label{eqn:bound_riskf_uninformative}
\end{equation}
\end{restatable}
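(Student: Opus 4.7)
The plan is to derive two upper bounds on $\riskTlh$ that differ only in whether the residual expectation of $|\labelingf_t - \labelingf_s|$ is taken under $\Ps$ or under $\Pt$, and then take their minimum to obtain the $\min\{\cdot,\cdot\}$ in the statement. Each bound rests on two ingredients already available: (i) the assumed triangle inequality for $\ell$, which implies pointwise $|h(x) - \labelingf_t(x)| \le |h(x) - \labelingf_s(x)| + |\labelingf_s(x) - \labelingf_t(x)|$, and (ii) a change-of-measure step that swaps $\Pt$ for $\Ps$ inside the expectation of a quantity bounded by $1$, with the excess controlled by the total variation. Because the argument works for a fixed $h$, no structural property of $\hypot$ is actually exploited.

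For the branch of the minimum involving $\E_{x\sim\Pt}[|\labelingf_t - \labelingf_s|]$, I would first apply the triangle inequality pointwise and integrate against $\ptdensity$, giving
\begin{equation*}
\riskTlh \le \E_{x\sim\Pt}[|h(x) - \labelingf_s(x)|] + \E_{x\sim\Pt}[|\labelingf_t(x) - \labelingf_s(x)|].
\end{equation*}
I would then change measure on the first term via $\E_{\Pt}[|h - \labelingf_s|] = \E_{\Ps}[|h - \labelingf_s|] + \int (\ptdensity - \psdensity)\,|h - \labelingf_s|\,dx$ and bound the integral by $\int |\ptdensity - \psdensity|\,dx$, using that the loss $\ell(x,y) = |h(x) - y|$ is at most $1$ for labels in $[0,1]$. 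Recognizing this $L^1$ distance as the TV $f$-divergence with $\phi(x) = \tfrac{1}{2}|x-1|$ (Table~\ref{tbl:choices_f_diver}) gives, up to the chosen normalization constant, the desired $D_{\rm TV}(\Ps\|\Pt)$ term.

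For the symmetric branch involving $\E_{x\sim\Ps}[|\labelingf_t - \labelingf_s|]$, I would swap the order of the two ingredients: first perform the change of measure on the full target risk, $\riskTlh = \E_{\Pt}[|h - \labelingf_t|] \le \E_{\Ps}[|h - \labelingf_t|] + D_{\rm TV}(\Ps\|\Pt)$, and then apply the triangle inequality inside the $\Ps$-expectation to split $|h - \labelingf_t| \le |h - \labelingf_s| + |\labelingf_s - \labelingf_t|$. This produces
\begin{equation*}
\riskTlh \le \riskSlh + D_{\rm TV}(\Ps\|\Pt) + \E_{x\sim\Ps}[|\labelingf_t(x) - \labelingf_s(x)|].
\end{equation*}
Taking the minimum of the two residual terms yields the statement. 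The main obstacle is not analytical but bookkeeping: one must track the normalization constant so that the scaling of $D_{\rm TV}$ arising from the $f$-divergence definition with $\phi(x) = \tfrac{1}{2}|x-1|$ matches the coefficient appearing in the stated bound; once this convention is fixed, the rest of the derivation is a direct combination of the triangle inequality and a single change-of-measure step.
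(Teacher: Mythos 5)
Your proof is correct and follows essentially the same route as the paper: an intermediate risk term (the paper uses $R^{\ell}_S(h,\labelingf_t)$, your second branch), a change of measure bounded by $\int|\ptdensity-\psdensity|$ using that the loss is at most $1$, and the triangle inequality on the labeling functions; you additionally spell out the symmetric branch of the $\min$, which the paper leaves implicit. Your normalization worry resolves itself because the paper defines $D_{\rm TV}$ for this theorem via $\phi(x)=|x-1|$ (no factor $\tfrac{1}{2}$), so the $L^1$ distance is exactly the stated divergence.
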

Here, $$D_{\rm TV}(\Ps\|\Pt) := \sup_{T \in \mathcal{T} } | \E_{x\sim \Ps} [T(x)] - \E_{x\sim \Pt}[T(x)] |$$ is the TV and $\mathcal{T}$ is the set of measurable functions. TV is an $f$-divergence such that $\phi(x) = |x - 1|$ in Definition~\ref{def:fdivergence}. For any function $\phi(x) \geq |x - 1|$, one can replace $D_{\rm TV}(\Ps\|\Pt)$ in 
Eq.~\eqref{eqn:bound_riskf_uninformative} 
with $D_{\phi}(\Ps\|\Pt)$. \Cref{theo:riskf_uninformative} thus bounds a classifier’s target error in terms of the source error, the divergence between the two domains, and the dissimilarity of the labeling functions. 
Unfortunately, $D_{\rm TV}(\Ps\|\Pt)$ cannot be estimated from finite samples of arbitrary distributions \citep{kifer2004detecting}. 
It is also a very loose upper bound
as it involves the supremum over all measurable functions and does not account for the hypothesis class.

\subsection{Measuring discrepancy with $f$-divergences}

In the previous section, we have shown that measuring the similarity between  $\Ps$ and $\Pt$ is critical in the derivation of generalization bounds and/or the design of algorithms. 
We now introduce a new discrepancy called $\fHdiscrepancy$ 
that aims to generalize previous results to the family of $f$-divergences while solving the two
aforementioned problems, namely \textbf{(1)} estimation of the divergence from finite samples of arbitrary distributions (\Cref{lemma_fhh_from_finite_samples})  and \textbf{(2)} restriction  of the discrepancy  to the set including the hypothesis class $\hypot$.
 (Defs.~\ref{def:disc1} and \ref{def:disc2}). 
In Section~\ref{sec:f-divergence_bound} we show 
 how this allows us to extend the bounds studied in \citet{ben2010theory}.

\begin{restatable}[\textbf{$\fHdiscrepancy$ discrepancy}]{definition}{}
	\label{def:disc1}
 Let $\phi^*$ be the Fenchel conjugate of  a convex, lower semi-continuous function $\phi$ that satisfies $\phi(1)=0$, and
let $\mathcal{\hat T}$ be a set of measurable functions 
such that $\mathcal{ \hat  T} = \{ \ell(h(x),h'(x)) : h,h' \in \hypot \}$. 
We define the discrepancy 
between $\Ps$ and $\Pt$ as:~
\begin{equation}\label{eq:def_fH}
\begin{aligned}
\fHdiscrepancy(\Ps || \Pt):= &\sup_{ h,h' \in \hypot  }  | \E_{x\sim \Ps} [\ell(h(x),h'(x))]- \\& \E_{x\sim \Pt}[\phi^{*}(\ell(h(x),h'(x)))  |.
\end{aligned}
\end{equation}
\end{restatable}

The $\fHdiscrepancy$  discrepancy can be interpreted as a lower bound estimator of a general class of $f$-divergences (\Cref{lem:upper_lower_bound_fdiv}). Therefore, for any hypothesis class $\hypot$ and choice of $\phi$, $\fHdiscrepancy$ is never larger than its corresponding $f$-divergence. 
In Lemma~\ref{lemma_fhh_from_finite_samples} we show that its computation can be bounded in terms of finite examples. %
Finally, we recover the $\hypot \Delta \hypot$-divergence \citep{ben2010theory} if we consider $\phi^*(t)=t$ and $\ell(h(x), h'(x)) = \one[h(x)\neq h'(x)]$, which is  {the TV}.
\begin{restatable}[\textbf{$\fhHdiscrepancy$ discrepancy}]{definition}{}
	\label{def:disc2}
 Under the same conditions as above, the discrepancy between two distributions $\Ps$ and $\Pt$ is defined by:~  
\begin{equation} \label{eq:proposed_discrepancy}
\begin{aligned}
\fhHdiscrepancy(\Ps || \Pt):= &\sup_{ h' \in \hypot  } | \E_{x\sim \Ps} [\ell(h(x),h'(x))]- \\& \E_{x\sim \Pt}[\phi^{*}(\ell(h(x),h'(x))) |.
\end{aligned}
\end{equation}
\end{restatable}
Taking the supremum of $\fhHdiscrepancy$ over $h\in \hypot$, we obtain $\fHdiscrepancy$, and thus $\fhHdiscrepancy(\Ps || \Pt) \leq \fHdiscrepancy(\Ps || \Pt)$.
This bound will be useful when deriving practical algorithms.

\begin{restatable}[\textbf{lower bound}]{lem}{Divergence} \label{lem:upper_lower_bound_fdiv} 
For any two functions $h$,$h'$ in $\hypot$, we have:
\begin{equation}\label{eq:chain}
\begin{aligned}
	|R^{\ell}_S(h,h')-R^{\phi^* \circ \ell}_T(h,h')|
	&\leq \fhHdiscrepancy(\Ps || \Pt) \leq \fHdiscrepancy(\Ps || \Pt)\\ &\leq 
\fdiv(\Ps || \Pt) .
  \end{aligned}
\end{equation}
\end{restatable}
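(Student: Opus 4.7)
My plan is to handle the three links in the chain separately, since they have different flavours. Throughout I abbreviate $T_{h,h'}(x) := \ell(h(x),h'(x))$, so that $R^{\ell}_S(h,h') = \E_{x\sim \Ps}[T_{h,h'}(x)]$ and $R^{\phi^*\circ \ell}_T(h,h') = \E_{x\sim \Pt}[\phi^*(T_{h,h'}(x))]$.

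For the first link, $|R^{\ell}_S(h,h')-R^{\phi^* \circ \ell}_T(h,h')| \leq \fhHdiscrepancy(\Ps\|\Pt)$, I would just unfold the definitions. With the above notation, the LHS is literally one term in the supremum appearing in Def.~\ref{def:disc2} (the one indexed by the given $h'$), and a supremum dominates any single term. For the second link, $\fhHdiscrepancy(\Ps\|\Pt) \leq \fHdiscrepancy(\Ps\|\Pt)$, I would note that $\fHdiscrepancy(\Ps\|\Pt) = \sup_{h \in \hypot} \fhHdiscrepancy(\Ps\|\Pt)$, so enlarging the feasible set from $\{h'\}$ to $\{(h,h')\}$ can only increase the supremum. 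Both links are really just ``enlarging the feasible set of a supremum can only increase its value''.

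The only substantive step is the third link, $\fHdiscrepancy(\Ps\|\Pt) \leq \fdiv(\Ps\|\Pt)$. Here I plan to invoke the variational characterization in Eq.~\eqref{eq:sup_measurable_function}: for any set $\mathcal{T}$ of measurable functions,
\begin{equation*}
  \sup_{T \in \mathcal{T}} \E_{x\sim \Ps}[T(x)] - \E_{x\sim \Pt}[\phi^*(T(x))] \leq \fdiv(\Ps\|\Pt).
\end{equation*}
Since every $T_{h,h'} \in \mathcal{\hat T}$ is measurable (because $\ell$ and all $h,h' \in \hypot$ are assumed measurable), applying the variational bound to $\mathcal{\hat T}$ directly controls the signed supremum $\sup_{T \in \mathcal{\hat T}}(\E_\Ps[T]-\E_\Pt[\phi^*(T)])$.

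The only real subtlety is the absolute value inside Def.~\ref{def:disc1}: to pass from the signed supremum to $\sup_{T \in \mathcal{\hat T}} |\E_\Ps[T]-\E_\Pt[\phi^*(T)]|$ one must also bound the ``reversed'' one-sided supremum $\sup_{T \in \mathcal{\hat T}}(\E_\Pt[\phi^*(T)]-\E_\Ps[T])$. My plan is to handle this by a case split on the sign of the inner expression: in the non-negative case the variational bound gives the result immediately, and in the opposite case I would re-apply the variational form symmetrically while using the non-negativity of $\fdiv(\Ps\|\Pt)$ (which follows from choosing the constant test function $T \equiv \phi'(1)$, since $\phi^*(\phi'(1)) = \phi'(1) - \phi(1) = \phi'(1)$) to absorb the sign. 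This case analysis is the main obstacle; the remainder of the proof is bookkeeping on top of Def.~\ref{def:disc1}, Def.~\ref{def:disc2}, and Eq.~\eqref{eq:sup_measurable_function}.
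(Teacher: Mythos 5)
Your first two links are exactly the paper's argument: both are instances of ``a supremum dominates each of its terms,'' and they are correct. The problem is the third link, and it sits precisely at the point you flag as the ``only real subtlety.'' The variational inequality \eqref{eq:sup_measurable_function} controls only the \emph{signed} supremum, $\sup_{T\in\mathcal{\hat T}}\big(\E_{x\sim\Ps}[T(x)]-\E_{x\sim\Pt}[\phi^*(T(x))]\big)\leq \fdiv(\Ps\|\Pt)$. Your plan to handle the reversed branch of the absolute value by ``re-applying the variational form symmetrically'' and absorbing the sign via non-negativity of $\fdiv$ cannot be carried out, because the reversed one-sided supremum is genuinely not bounded by $\fdiv(\Ps\|\Pt)$. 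Take $\Ps=\Pt$, so that $\fdiv(\Ps\|\Pt)=0$; then
\begin{equation*}
\E_{x\sim\Pt}[\phi^*(\ell(h(x),h'(x)))]-\E_{x\sim\Ps}[\ell(h(x),h'(x))]=\E\big[\phi^*(\ell)-\ell\big]\;\geq\;0,
\end{equation*}
since $\phi^*(t)\geq t$ whenever $\phi(1)=0$ (the paper's Lemma~\ref{lem:greater_than_id}, which your parenthetical about $\phi^*(\phi'(1))=\phi'(1)$ essentially reproves). For the JS conjugate $\phi^*(t)=-\log(2-e^t)$ this inequality is strict unless $\ell(h,h')=0$ almost everywhere, so the reversed term is strictly positive while the divergence is zero. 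Non-negativity of $\fdiv$ gives you nothing to ``absorb'' here, and no symmetric substitution into \eqref{eq:sup_measurable_function} (swapping $\Ps$ and $\Pt$, or replacing $T$ by $-T$) produces the expression $\E_{\Pt}[\phi^*(T)]-\E_{\Ps}[T]$.

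For calibration: the paper's own proof of this link asserts $\fdiv(\Ps\|\Pt)=\sup_{T}|\E_{x\sim\Ps}[T(x)]-\E_{x\sim\Pt}[\phi^*(T(x))]|$ ``since $D_\phi$ is nonnegative'' and then restricts $T$ to $\mathcal{\hat T}$; that identity has exactly the same hole, so you have correctly located the weak point of the lemma rather than introduced a new one. But as written your case split is not a proof: the chain closes cleanly only for the signed quantity (i.e.\ with the absolute values dropped from Definitions~\ref{def:disc1} and~\ref{def:disc2}, which is all that the proof of Theorem~\ref{thm:general_bound} actually consumes), or under additional hypotheses such as $\phi^*=\mathrm{id}$ on the range of $\ell$, as in the total-variation specialization. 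If you want to keep the absolute value, you must either add such a hypothesis or separately prove the reversed bound, which the counterexample above shows is impossible in general.
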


\Cref{lem:upper_lower_bound_fdiv} is fundamental in the derivation of divergence-based generalization bounds for DA. Specifically,   
it bounds the gap between the source and target domains' error functions in terms of the discrepancy between their distributions using $f$-divergences. %
We now show that the $\fhHdiscrepancy$ can be estimated from finite samples.

\begin{restatable}[]{lem}{DivergenceRadamacher} \label{lemma_fhh_from_finite_samples} 
Suppose $\ell:\domainout \times \domainout \to [0,1]$, $\phi^*$  $\textrm{L}$-Lipschitz continuous, and $[0,1]\subset \dom \phi^*$. Let  $\sourcedataset$ and  $\targetdataset$ be two empirical distributions corresponding to  datasets containing  $n$ data points sampled i.i.d.~from  $\Ps$ and $\Pt$, respectively. 
Let us note $\mathfrak{R}$ the Rademacher complexity of a given class of functions, and  $\ell\circ \Hc := \{x\mapsto \ell(h(x), h'(x)): h, h'\in \Hc\}$. $\forall \delta \in (0,1)$, we have with probability of at least $1-\delta$:
\begin{equation}
	\begin{aligned}
&|\fhHdiscrepancy(\Ps||\Pt)-\fhHdiscrepancy(S || T)| \leq 2 \mathfrak{R}_{\Ps}(\ell \circ \hypot)   \\&+ \ 2 \textrm{L} \mathfrak{R}_{\Pt}(\ell \circ \hypot) + 2 \sqrt{(-\log{\delta})/(2 n)}.
\end{aligned}
\end{equation}
\end{restatable}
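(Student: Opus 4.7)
The plan is to reduce the two-sided bound to a standard uniform convergence problem over the function class $\ell \circ \Hc$, which we can then attack with the usual McDiarmid plus Rademacher symmetrization pipeline. First, I would strip away the outer suprema and absolute values using the elementary inequalities $|\sup_a u(a) - \sup_a v(a)| \le \sup_a |u(a) - v(a)|$ and $\bigl||u|-|v|\bigr| \le |u-v|$. Writing $A(h') := \E_{x\sim\Ps}[\ell(h(x),h'(x))]$, $\hat A(h') := \E_{x\sim S}[\ell(h(x),h'(x))]$, $B(h') := \E_{x\sim\Pt}[\phi^*(\ell(h(x),h'(x)))]$, $\hat B(h') := \E_{x\sim T}[\phi^*(\ell(h(x),h'(x)))]$, these inequalities give
\begin{equation*}
|\fhHdiscrepancy(\Ps\|\Pt)-\fhHdiscrepancy(S\|T)| \;\le\; \sup_{h'\in\Hc}|A(h')-\hat A(h')| \;+\; \sup_{h'\in\Hc}|B(h')-\hat B(h')|.
\end{equation*}

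Next I would bound each supremum in expectation by a Rademacher complexity. For the first, a standard symmetrization argument gives
\begin{equation*}
\E\Bigl[\sup_{h'\in\Hc}|A(h')-\hat A(h')|\Bigr] \;\le\; 2\,\mathfrak{R}_{\Ps}(\ell\circ\Hc),
\end{equation*}
using that the class obtained by fixing $h$ and varying $h'$ is contained in $\ell\circ\Hc$. For the second, I would invoke Talagrand's contraction lemma: because $\phi^*$ is $L$-Lipschitz on $[0,1]\supseteq \mathrm{range}(\ell)$, the Rademacher complexity of $\phi^*\circ\ell\circ\Hc$ is at most $L$ times that of $\ell\circ\Hc$, so
\begin{equation*}
\E\Bigl[\sup_{h'\in\Hc}|B(h')-\hat B(h')|\Bigr] \;\le\; 2L\,\mathfrak{R}_{\Pt}(\ell\circ\Hc).
\end{equation*}

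The concentration step uses McDiarmid's bounded differences inequality applied to the sum of the two suprema as a function of the $2n$ i.i.d. samples. Since $\ell\in[0,1]$, replacing one source sample changes $\sup_{h'}|A-\hat A|$ by at most $1/n$; since $\phi^*$ is $L$-Lipschitz on the range of $\ell$, replacing one target sample changes $\sup_{h'}|B-\hat B|$ by at most $L/n$. McDiarmid then yields a deviation of order $\sqrt{(1+L^2)\log(1/\delta)/n}$, which combined with the two Rademacher bounds above and a union bound produces the stated inequality (up to the precise constants in the last term).

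The main obstacle is step~2: Talagrand's contraction lemma is normally stated without absolute values on the supremum, so I would either (i) split $|B-\hat B|$ into its positive and negative parts and apply contraction to each via a union bound, or (ii) absorb the absolute value by symmetrizing the class $\phi^*\circ\ell\circ\Hc\cup -(\phi^*\circ\ell\circ\Hc)$, whose Rademacher complexity is still controlled by $L\,\mathfrak{R}_{\Pt}(\ell\circ\Hc)$. A secondary subtlety is that the class in Lemma~\ref{lemma_fhh_from_finite_samples} is written in terms of both $h$ and $h'$ whereas only $h'$ varies in the discrepancy; this is harmless because the supremum over a smaller class is dominated by the supremum over the larger one.
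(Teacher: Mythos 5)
Your proposal is correct and follows essentially the same route as the paper: decompose $|\sup-\sup|$ via the elementary sup and reverse-triangle inequalities into a source term and a target term, bound each by a Rademacher complexity (the paper cites Theorem 3.3 of Mohri et al., which packages exactly your symmetrization-plus-McDiarmid step), and apply Talagrand's contraction lemma to replace $\mathfrak{R}_{\Pt}(\phi^*\circ\ell\circ\hypot)$ by $L\,\mathfrak{R}_{\Pt}(\ell\circ\hypot)$. Your remark that the $L$-Lipschitzness should also enter the bounded-differences constant for the target samples is a fair observation about the final constant, but it does not change the structure of the argument.
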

\updated{
In Lemma \ref{lemma_fhh_from_finite_samples}, we have shown that the empirical $\fhHdiscrepancy$  converges to the true $\fhHdiscrepancy$ discrepancy. It can then be estimated using a set of finite samples from the two distributions.
The gap is bounded by the complexity of the hypothesis class and the number of examples ($n$).
This result will also be important in the derivation of Theorem~\ref{thm:finite_samples_bound}.
}

\subsection{Domain Adaptation: Generalization Bounds} %
\label{sec:f-divergence_bound}
We now provide a novel generalization bound to estimate the error of a classifier in the target domain using the proposed $\fhHdiscrepancy$ divergence and results from the previous section. 
We also provide a generalization Rademacher complexity bound for a binary classifier\footnote{
	Similar bounds can be derived for the multi-class scenario if we let $h: \domainin \times \domainout$ being a score function and $\ell(x,y)=1[\argmax_{\hat y} h(x,\hat y) \neq y ]$ (i.e see \cite{mohri2018foundations} Chapter 9).
} 
based on the estimation of the $\fhHdiscrepancy$ from finite samples. We show that our bound generalizes previous results  
in \Cref{sec:generalizing_ben2010theory}.

\begin{restatable}[\textbf{generalization bound}]{thm}{GenBoundF}\label{thm:general_bound}

Suppose $\ell: \domainout \times \domainout \to [0, 1] \subset \dom \phi^*$. 
Denote $\lambda^*:=R^{\ell}_S( h^*) +  R^{\ell}_T( h^*) , $ and let $h^*$ be the ideal joint hypothesis. 
We have:
\begin{equation}
R^{\ell}_T(h) \leq  R^{\ell}_S( h) +  \fhHdiscrepancy(\Ps || \Pt) +\lambda^*.
\end{equation}

\end{restatable}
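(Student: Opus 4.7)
The plan is to follow the classical Ben-David--style argument (triangle inequality on the loss, then bridge through the ideal joint hypothesis $h^*$), but with one extra wrinkle: the discrepancy $\fhHdiscrepancy$ does not compare $R^{\ell}_S$ with $R^{\ell}_T$ directly, but with $R^{\phi^*\circ \ell}_T$. I would bridge this gap using a Fenchel--Young style inequality to show $\phi^*(t)\geq t$ on $[0,1]$.

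First, I would invoke the triangle inequality for $\ell$ twice to break the target risk through $h^*$:
\begin{align*}
R^{\ell}_T(h) &= R^{\ell}_T(h, \labelingf_t) \leq R^{\ell}_T(h, h^*) + R^{\ell}_T(h^*, \labelingf_t),\\
R^{\ell}_S(h, h^*) &\leq R^{\ell}_S(h, \labelingf_s) + R^{\ell}_S(h^*, \labelingf_s) = R^{\ell}_S(h) + R^{\ell}_S(h^*).
\end{align*}
The two terms $R^{\ell}_S(h^*) + R^{\ell}_T(h^*, \labelingf_t)$ together form exactly $\lambda^*$, so it remains to control $R^{\ell}_T(h, h^*)$ by $R^{\ell}_S(h, h^*)$ plus the discrepancy.

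The key step is to handle the mismatch that $\fhHdiscrepancy$ bounds $|R^{\ell}_S(h,h') - R^{\phi^*\circ \ell}_T(h,h')|$ (by Lemma~\ref{lem:upper_lower_bound_fdiv} with $h' = h^*$) rather than $|R^{\ell}_S(h,h') - R^{\ell}_T(h,h')|$. Here I would use Fenchel--Young: for any $t\in \dom \phi^*$ and any $x\in \R_+$, $\phi^*(t) \geq xt - \phi(x)$. Setting $x=1$ and using $\phi(1)=0$ yields $\phi^*(t)\geq t$ on $[0,1]\subset \dom \phi^*$. Applying this pointwise to $\ell(h(x),h^*(x))\in [0,1]$ and taking expectation under $\Pt$ gives
\begin{equation*}
R^{\ell}_T(h, h^*) \;\leq\; R^{\phi^*\circ \ell}_T(h, h^*).
\end{equation*}
Combining this with Lemma~\ref{lem:upper_lower_bound_fdiv} produces
\begin{equation*}
R^{\ell}_T(h, h^*) \;\leq\; R^{\ell}_S(h, h^*) + \fhHdiscrepancy(\Ps \| \Pt).
\end{equation*}

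Finally, I would chain these inequalities:
\begin{align*}
R^{\ell}_T(h) &\leq R^{\ell}_T(h, h^*) + R^{\ell}_T(h^*, \labelingf_t)\\
&\leq R^{\ell}_S(h, h^*) + \fhHdiscrepancy(\Ps\|\Pt) + R^{\ell}_T(h^*)\\
&\leq R^{\ell}_S(h) + R^{\ell}_S(h^*) + R^{\ell}_T(h^*) + \fhHdiscrepancy(\Ps\|\Pt)\\
&= R^{\ell}_S(h) + \fhHdiscrepancy(\Ps\|\Pt) + \lambda^*,
\end{align*}
which is the desired bound. The only nontrivial ingredient is the Fenchel--Young observation $\phi^*(t)\geq t$: everything else is the classical triangle-inequality manipulation. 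I expect this step to be the main (but minor) obstacle, since it is what cleanly justifies replacing the $\phi^*$-weighted target risk in Lemma~\ref{lem:upper_lower_bound_fdiv} by the ordinary target risk and makes the bound structurally identical to the Ben-David et al.~template while accommodating any $f$-divergence with $\phi(1)=0$.
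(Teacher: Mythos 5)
Your proof is correct and follows essentially the same route as the paper's: triangle inequality through the ideal joint hypothesis $h^*$, the Fenchel--Young observation $\phi^*(t)\geq t$ (which is precisely the auxiliary lemma the paper states inside its proof, derived identically from $\phi^*(t)\geq 1\cdot t-\phi(1)$), and then Lemma~\ref{lem:upper_lower_bound_fdiv} to absorb $|R^{\ell}_S(h,h^*)-R^{\phi^*\circ\ell}_T(h,h^*)|$ into $\fhHdiscrepancy(\Ps\|\Pt)$. No gaps.
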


The three terms in this upper bound  share similarity with the bounds in~\citet{ben2010theory} and~\citet{zhang19bridging}.
The main difference lies in the discrepancy 
being used to compare the two marginal distributions. 
\citet{ben2010theory} use the $\hypot \Delta \hypot$ divergence  (a reduction of {the TV}), and  \citet{zhang19bridging} use the MDD.
In our case, we use a reduction of a  lower bound estimator of a variational characterization of the general $f$-divergences. 
This generalizes {the TV} (and thus \cite{ben2010theory}) and also includes popular divergences typically used in practice (see \Cref{sec:understanding_dann_mdd}).
Intuitively, the first term in the bound accounts for the source error, the second term corresponds to the discrepancy between the marginal
distributions, and the third term measures the ideal joint hypothesis ($\lambda^*$). 
If $\hypot$
is expressive enough and the labeling functions are similar, this last term could be reduced to a small value. 
The ideal joint hypothesis %
incorporates the notion of adaptability: 
when the optimal hypothesis performs poorly in either domain,
we cannot expect successful adaptation.

\begin{restatable}[\textbf{generalization bound with Rademacher complexity}]{thm}{Radamacher} \label{thm:finite_samples_bound}
Let $\ell: \domainout \times \domainout \to [0,1]$ and $\phi^*$ be $\textrm{L}$-Lipschitz continuous. Let  $\sourcedataset$ and $ \targetdataset$ be two empirical distributions (i.e. datasets containing  $n$ data points sampled i.i.d.~from  $\Ps$ and $\Pt$, respectively). 
Denote $\hat \lambda^*:=  \hat R^{\ell}_S( h^*) + \hat R^{\ell}_T( h^*)$. $\forall \delta \in (0,1)$, we have with probability of at least $1-\delta$: 
\begin{align}
	R^{\ell}_T(h) &\leq \hat R^{\ell}_S( h) + \fhHdiscrepancy(\sourcedataset ||\targetdataset) + \hat \lambda^* \nonumber \\
	 &+ 6\mathfrak{R}_{S}(\ell \circ \hypot)  +  2(1+L) \mathfrak{R}_{T}(\ell \circ \hypot) \nonumber \\ &+ 5 \sqrt{(-\log{\delta})/(2 n)}.
\end{align}

\end{restatable}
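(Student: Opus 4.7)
The plan is to start from the population-level bound in Theorem~\ref{thm:general_bound} and replace each population quantity by its empirical counterpart, paying a Rademacher-complexity concentration price for each substitution, then collect everything via a union bound.

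Concretely, Theorem~\ref{thm:general_bound} gives
\begin{equation*}
R^{\ell}_T(h) \;\leq\; R^{\ell}_S(h) + \fhHdiscrepancy(\Ps \| \Pt) + R^{\ell}_S(h^*) + R^{\ell}_T(h^*),
\end{equation*}
so I need four one-sided concentration inequalities to upper-bound the right-hand side by its empirical version. First, the standard Rademacher bound for bounded losses (Mohri et al., Ch.~3) yields, with probability at least $1-\delta'$,
\begin{equation*}
R^{\ell}_S(h) \;\leq\; \hat R^{\ell}_S(h) + 2\mathfrak{R}_S(\ell \circ \hypot) + \sqrt{-\log \delta'/(2n)},
\end{equation*}
and I apply the same bound to $h^*$ on both the source and target samples, giving bounds on $R^{\ell}_S(h^*)$ and $R^{\ell}_T(h^*)$ in terms of $\hat R^{\ell}_S(h^*)$ and $\hat R^{\ell}_T(h^*)$ respectively (the latter using $\mathfrak{R}_T$ rather than $\mathfrak{R}_S$). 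Finally, Lemma~\ref{lemma_fhh_from_finite_samples} handles the divergence term: with probability at least $1-\delta'$,
\begin{equation*}
\fhHdiscrepancy(\Ps\|\Pt) \;\leq\; \fhHdiscrepancy(\sourcedataset\|\targetdataset) + 2\mathfrak{R}_S(\ell\circ \hypot) + 2L\,\mathfrak{R}_T(\ell \circ \hypot) + 2\sqrt{-\log \delta'/(2n)}.
\end{equation*}

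Summing these four inequalities recovers the empirical source risk $\hat R^{\ell}_S(h)$, the empirical divergence $\fhHdiscrepancy(\sourcedataset\|\targetdataset)$, and $\hat \lambda^* = \hat R^{\ell}_S(h^*) + \hat R^{\ell}_T(h^*)$. The $\mathfrak{R}_S$ coefficient adds up to $2+2+2 = 6$ (one copy each from the two source-risk bounds and the divergence bound), and the $\mathfrak{R}_T$ coefficient is $2L + 2 = 2(1+L)$ (one copy from the target-risk bound on $h^*$ and the divergence bound). The $\sqrt{-\log \delta'/(2n)}$ coefficients add to $1+1+1+2 = 5$, matching the stated bound.

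The only subtlety — and the main obstacle — is the union bound: the four events must hold simultaneously, so I would invoke each with failure probability $\delta/4$, producing $5\sqrt{\log(4/\delta)/(2n)}$. The stated bound $5\sqrt{-\log \delta/(2n)}$ is a slight simplification obtained by absorbing the $\log 4$ into the constant (or, equivalently, by reparameterizing $\delta$); this is standard in Rademacher-style DA bounds and the same convention used in~\citet{ben2010theory}. Everything else is arithmetic: collect the Rademacher terms, collect the $\sqrt{\log}$ terms, and identify $\hat\lambda^*$.
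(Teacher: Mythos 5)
Your proof follows essentially the same route as the paper's: start from Theorem~\ref{thm:general_bound}, apply the standard Rademacher uniform-convergence bound (Theorem 3.3 of \citet{mohri2018foundations}) to $R^{\ell}_S(h)$, $R^{\ell}_S(h^*)$ and $R^{\ell}_T(h^*)$, apply Lemma~\ref{lemma_fhh_from_finite_samples} to the discrepancy term, and sum, which yields exactly the coefficients $6$, $2(1+L)$ and $5$. Your remark about the union bound --- that the four events should each be invoked with failure probability $\delta/4$, so the clean $5\sqrt{-\log\delta/(2n)}$ requires absorbing a $\log 4$ or reparameterizing $\delta$ --- is correct and identifies a point the paper's own proof silently glosses over.
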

\Cref{thm:finite_samples_bound} provides the computation of our generalization bound for a  {binary classifier} 
in terms of the Rademacher complexity of the class $\hypot$. Under the assumption of an ideal joint hypothesis $\hat{\lambda}^*$, the  generalization error can be reduced by jointly minimizing the risk in the source domain, the discrepancy between the two distributions, and regularizing the model to limit  the complexity of the hypothesis class. We take all these into account when deriving practical algorithms in the next sections.

\section{Training Algorithm}
We now exploit the results introduced above to derive a novel and practical domain-adversarial 
algorithm.
We show how our framework for a particular divergence allows us to reinterpret and rectify the original domain-adversarial training method from \citet{ganin2016domain}. Our analysis highlights the differences between our adversarial training algorithm and that from \citet{ganin2016domain}.
Finally, we analyze the use of  $\g$ weighted $f$-divergences. This sheds lights on why the practical objective from \citet{zhang19bridging} outperforms DANN (\citet{ganin2016domain}) and shows how, after a key correction of the latter, the performance gap vanishes.  

\subsection{$f$-Domain Adversarial Learning ($f$-DAL)}
	\label{sec:f-dal}

We now use the theory presented in the previous sections to derive  $f$-DAL, a novel generalized domain adversarial learning framework. 

\textbf{Notation.} 
Let the hypothesis $h$ be the composition of  $h=\hat h \circ g$ (i.e. let $\hypot:=\{\hat h \circ g  : \hat h \in \hat \hypot,  g \in \mathcal{G} \}$ with $\hat \hypot$ another function class) where $g: \domainin \to \latentspace$. 
This can be interpreted as a mapping that  pushes forward the two densities $\psdensity$ and $\ptdensity$ to a representation space $\mathcal{Z}$ where a classifier $\hat h \in \hat \hypot$ operates. Consequently, we denote by $\pzsdensity := g\# \psdensity$ and $\pztdensity := g\# \ptdensity$ the push-forwards  
of the source and target domain densities, respectively. 
\updated{Figure~\ref{fig:fdal_architecture} illustrates the $f$-DAL framework.}

From Theorem~\ref{thm:general_bound}, for adaptation to be possible in the representation space $\latentspace $, we assume the existence of some  $\hat h \in \hat \hypot$ such that the ideal joint risk $\lambda^{*}$ is negligible.  This condition is necessary even if $\pzsdensity= \pztdensity$.
In other words, 
we need both, the difference between $\pzsdensity$ and $\pztdensity$, and the ideal joint risk $\lambda^{*}$ to be small. 
These are both sufficient and necessary conditions. 
We refer the reader to \citet{david2010impossibility} for details on the impossibility theorems for DA. 
Thus, we
 assume that 
there exist some $g \in \mathcal{G}$ and $\hat h^* \in \hat \hypot $, such that 
the ideal joint risk ($\lambda^{*}$)
is negligible. 
These assumptions are ubiquitous in modern DA methods, including SoTA methods \cite{ganin2016domain,long2018conditional,hoffman2018cycada,zhang19bridging}  (sometimes not explicitly mentioned). 
It was recently shown in \citet{zhao2019learning} that for this to be true in the present context,
the label distributions between source and target must be close. 
In \Cref{app:label_shift}, we provide further analysis and experimental results on the robustness of $f$-DAL to label shift. 
Moreover, we show that $f$-DAL can be simply combined with methods that deal with this setting, further boosting their performance. 
We emphasize however that dealing with label shift is outside of the scope of this work.

From Theorem~\ref{thm:general_bound},
 the target risk $\riskTlh$  can be minimized by jointly minimizing the error in the source domain and the discrepancy between the two distributions. Let $y$ be the label of a source data point $z$, 
an optimization objective can be clearly written as:~
\begin{equation}\label{eqn:optimizing_bound}
\min_{\hat h \in \hat \hypot} \E_{z \sim \pzsdensity } [\ell(\hat h(z),y)]  + \textrm{D}^{\phi}_{\hat h,\hat \hypot} (\pzsdensity||\pztdensity).  %
\end{equation}
Here, $\ell$ is a surrogate loss function used to minimize the empirical risk in the source domain.
Under mild assumptions (see Proposition~\ref{prop:optimal_dst}) and the use of Lemma~\ref{lem:upper_lower_bound_fdiv}, %
the minimization problem in \eqref{eqn:optimizing_bound} can be upper bounded (hence replaced) by
the following min-max objective\footnote{$d_{s,t}$ can be seen as an upper bound of the $\fhHdiscrepancy$ discrepancy.}:
\begin{equation}\label{eqn:minmax_optimization_objective}
\min_{\hat h \in \hat \hypot} \max_{ \hat h' \in \hat \hypot} \E_{z \sim \pzsdensity } [\ell(\hat h(z),y)] + d_{s,t} \text{~~~~~where} \\
\end{equation}
\begin{equation*}
d_{s,t}:= \E_{z \sim \pzsdensity } [\hat \ell(\hat h'(z),\hat h(z))]- \E_{z  \sim \pztdensity } [(\phi^* \circ \hat \ell)( \hat h'(z),\hat h(z))].    %
\end{equation*}
We now formalize this result. %
\begin{restatable}[]{prop}{OptSol}\label{prop:optimal_dst}
Suppose $d_{s,t}$ takes the form shown in \eqref{eqn:minmax_optimization_objective} with $\hat \ell(\hat h'(z),\hat h(z)) \to \dom \phi^*$ and 
 that for any $\hat h \in \hat \hypot$ (unconstrained), there exists $\hat h' \in \hat \hypot$ s.t.~
$\hat \ell(\hat h'(z),\hat h(z))=\phi'(\frac{\pzsdensity(z)}{\pztdensity(z)})$ for any $z\in \supp(\pztdensity(z))$, with $\phi'$ the derivative of $\phi$.
The optimal $d_{s,t}$  is $\fdiv (\pzsdistribution || \pztdistribution)$, i.e.~$\max_{\hat h' \in \hat \hypot} d_{s,t} = \fdiv(\pzsdistribution || \pztdistribution)$.
\end{restatable}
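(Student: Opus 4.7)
The plan is to recognize that $d_{s,t}$ is exactly the Nguyen–Wainwright–Jordan variational lower bound on $\fdiv(\pzsdistribution\|\pztdistribution)$ applied to the particular test function $T(z):=\hat\ell(\hat h'(z),\hat h(z))$. So the proof reduces to two ingredients: an upper bound that holds for every admissible $\hat h'$, and a matching lower bound that is achieved under the stated assumption.

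For the upper bound, I would recall (from \Cref{app:divergences} and inequality \eqref{eq:sup_measurable_function}) the variational representation
\begin{equation*}
\fdiv(\pzsdistribution\|\pztdistribution)=\sup_{T\colon \latentspace\to\dom\phi^{*}}\bigl\{\E_{z\sim\pzsdensity}[T(z)]-\E_{z\sim\pztdensity}[\phi^{*}(T(z))]\bigr\}.
\end{equation*}
Since the assumption $\hat\ell(\hat h'(z),\hat h(z))\in\dom\phi^{*}$ guarantees that, for each $\hat h'\in\hat\hypot$, the function $T(z)=\hat\ell(\hat h'(z),\hat h(z))$ is an admissible test function, plugging it in immediately yields $d_{s,t}\leq \fdiv(\pzsdistribution\|\pztdistribution)$, and hence $\max_{\hat h'\in\hat\hypot}d_{s,t}\leq \fdiv(\pzsdistribution\|\pztdistribution)$.

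For the matching lower bound, the standard computation pinpointing the maximizer of the variational form: taking a pointwise (Gâteaux) derivative of the integrand $p_s^z(z)T(z)-p_t^z(z)\phi^{*}(T(z))$ with respect to $T(z)$ gives the first-order condition $p_s^z(z)=p_t^z(z)(\phi^{*})'(T(z))$, i.e.\ $T^{\star}(z)=\phi'\!\bigl(p_s^z(z)/p_t^z(z)\bigr)$ on $\supp(p_t^z)$, using the Fenchel conjugacy identity $(\phi^{*})'=(\phi')^{-1}$ (valid since $\phi$ is convex and lower semi-continuous with $\phi(1)=0$). Substituting $T^\star$ back and using $\phi^{*}(\phi'(u))=u\phi'(u)-\phi(u)$ recovers $\int p_t^z(z)\phi(p_s^z(z)/p_t^z(z))\,dz=\fdiv(\pzsdistribution\|\pztdistribution)$. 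The assumption in the proposition is precisely that this optimal $T^{\star}$ is realized in the parameterized family: there exists $\hat h'\in\hat\hypot$ with $\hat\ell(\hat h'(z),\hat h(z))=\phi'(p_s^z(z)/p_t^z(z))$ on $\supp(p_t^z)$. Plugging this $\hat h'$ into $d_{s,t}$ therefore attains $\fdiv(\pzsdistribution\|\pztdistribution)$, which combined with the upper bound gives equality.

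The only delicate step is the first-order characterization of $T^{\star}$, since technically the variational identity should be stated as the Fenchel biconjugate equality applied pointwise, and one must verify that the integrand can be maximized inside the integral (standard under the convexity/measurability assumptions already in Definition~\ref{def:fdivergence}). Everything else is a direct substitution. I would therefore keep the proof brief: state the two bounds, exhibit the maximizer via the Fenchel identity $\phi^{*}(\phi'(u))=u\phi'(u)-\phi(u)$, and invoke the hypothesis to conclude that this maximizer lies in the class $\hat\hypot$ through the surrogate $\hat\ell$.
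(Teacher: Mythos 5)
Your proposal is correct and follows essentially the same route as the paper: both rewrite $d_{s,t}$ via the variational (Nguyen--Wainwright--Jordan) characterization, identify the pointwise maximizer through the first-order condition $\pzsdensity(z)/\pztdensity(z)\in\partial\phi^*(\hat\ell(\hat h'(z),\hat h(z)))$ and the Fenchel identity $\phi(x)+\phi^*(t)=xt$, and invoke the realizability hypothesis to conclude $\max_{\hat h'}d_{s,t}=\fdiv(\pzsdistribution\|\pztdistribution)$. Your explicit split into an admissibility upper bound plus an achievability step, and the verification $\phi^{*}(\phi'(u))=u\phi'(u)-\phi(u)$, merely make explicit what the paper compresses into its final citation of the variational characterization.
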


If we let the feature extractor $g \in \mathcal{G}$ 
be the one that minimizes both the source error and the discrepancy term, Eq.~\eqref{eqn:minmax_optimization_objective} can be rewritten as:
\begin{equation}\label{eqn:practical_minimax_objective}
\begin{aligned}
\min_{\hat h \in \hat \hypot, g \in \mathcal{G}} \max_{ \hat h' \in \hat \hypot} & \E_{x \sim \psdensity } [\ell(\hat h\circ g ,y)]  
+    \E_{x \sim \psdensity } [\hat \ell(\hat h' \circ g ,\hat h\circ g )] \\&- \E_{x  \sim \ptdensity } [(\phi^* \circ \hat \ell)( \hat h'\circ g ,\hat h\circ g) ] . %
\end{aligned}
\end{equation}

We let $\hat{\ell}(c, b) = a(b_{\argmax\, c})$, where $\argmax\, a$ is the index of the largest element of vector $a$. 
For the choice of $a(.)$, we follow \citet{lee2016fgan} and choose it to be a monotonically increasing function when possible. 
This implies that we choose the domain of $\hat{\ell}$ to be $\R^k \times \R^k$ with $k$ categories.
Intuitively, $\hat h'$ is an auxiliary per-category domain classifier. This makes our framework different from DANN.

\begin{figure}
    \centering
\includegraphics[trim=150 70 120 50, clip,width=0.45\linewidth]{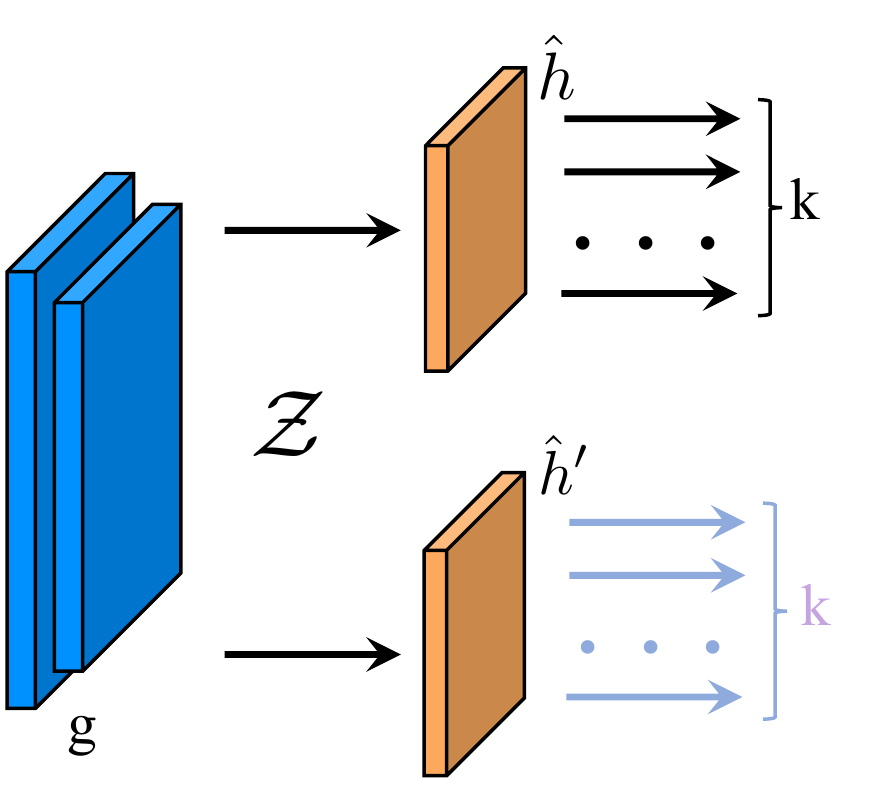} 
\vspace{-3mm}
\caption{
    \updated{\footnotesize \textbf{$f$-DAL framework}. We interpret $h:\domainin \to \domainout$ as the composition of two networks $h=\hat h \circ g$, where $g: \domainin \to \latentspace$ and $\hat h$ is a classifier operating in a representation space $\latentspace$.  Inspired by our bounds, we let $\hat h'$ be a network of the same topology as $\hat h$. This is interpreted as a per-category domain classifier. Unlike us, \citet{ganin2016domain} use a global domain-classifier or “discriminator”.  
    }}
    \label{fig:fdal_architecture}
\end{figure}

\subsection{Revisiting Domain-Adversarial Training (DANN)}\label{sec:revisiting_dann}
The original idea of domain-adversarial training was introduced in \citet{ganin2016domain} and motivated with the theoretical results of
\citet{ben2010theory}. Specifically, the \textit{domain-classifier/regularizer} is inspired by the proxy $\mathcal{A}$-distance \cite{ben2007analysis} which is an approximation of the empirical estimation of the $\hypot \Delta  \hypot$ divergence. 
While it has been shown that under mild assumptions the discrepancy being minimized in DANN corresponds to the JS divergence (see \Cref{sec:understanding_dann_mdd}), the connection between this and the DA theory has not been made clear since, to the best of our knowledge, generalization bounds for DA with $f$-divergences has not been derived. 

In this section, we use our bounds and algorithmic framework to revisit the domain-adversarial training method from \citet{ganin2016domain}.
The analysis shows that while both can be interpreted as minimizing the JS divergence and thus are in line with our theoretical results (\Cref{thm:general_bound}, \Cref{lem:upper_lower_bound_fdiv} and  \Cref{sec:understanding_dann_mdd}), 
DANN ignores the contribution of the source classifier which is not desirable or intuitive.
Experimental results confirm that this apparently subtle difference leads to significant gains (using the same JS divergence, see \cref{tab:fdal_vs_dann_several_datasets,tbl:significancetestfdaljs}).  
To explicitly see this, let us first rewrite the $d_{s,t}$ term in $f$-DAL (\Cref{eqn:practical_minimax_objective}) using the JS divergence (shifted up to a constant that does not alter optimization). We then have
 $\hat{\ell}(h', h ) = \log \sigmoid( {h'}_{\argmax h})    $ and $\phi^*(t) = -\log (1 - e^t)$, where $\sigmoid(x):=\frac{1}{1+\exp(-x)}$ is the sigmoid function.

Plugging all together and rewriting conveniently, we obtain: 
 \begin{equation} \label{fdalJS_Discrepancy}
 \begin{split}
d_{s,t}&=  \E_{x_s \sim \psdensity} \log \sigmoid \circ   \left [\hh'\circ g(x_s)\right ]_{\argmax h  }  \\&+\E_{x_t \sim \ptdensity} \log \left(1-\sigmoid \circ \left [\hh'\circ g(x_t) \right ]_{\argmax h } \right)
\end{split}
 \end{equation}
 which is the resulting $d_{s,t}$ term of $f$-DAL for the JS divergence.
Assuming the output of the source classifier $\hat h$ is constant in terms of the $\argmax$ operator (e.g.~$\hat h=e_i$, with $e_i$ any standard basis vector), we obtain  after manipulation the second part of the expression shown in Equation~(9) in~\citet{ganin2016domain}.
Effectively, this shows that DANN 
ignores the contribution of the source classifier $\hat h$. 
In fact, \textit{it assumes that the output of the source classifier is always constant} (e.g.~$\hat h=e_i$), which is problematic. 
Moreover, the motivation of DANN through the  proxy $\mathcal{A}$-distance ignores the topology/architecture of the discriminator network. This is in contrast with our formulation which suggests that the topology of the per-category domain classifier $\hat h'$ should be identical to that of $\hat h$ since both $\hat h'$, $\hat h \in \hat \hypot$  (\Cref{fig:fdal_architecture}) .

We additionally notice that  $f$-DAL can explain DANN and connect it with the DA theory directly from a JS minimization perspective (i.e. without relying on an approximation of the empirical $\hypot \Delta \hypot$ divergence as in  \citet{ganin2016domain}). This result follows from \Cref{lem:upper_lower_bound_fdiv} and details can be found in  \Cref{sec:understanding_dann_mdd}. This allows us to compare head-to-head $f$-DAL JS vs  DANN, in which scenario   $f$-DAL can be understood as the \textit{corrected/revisited} version of DANN.

\subsection{On $\gamma$-weighted $f$-divergences}\label{sec:gamma_weighted_div}

If we relax the need for $\phi(1) = 0$ in  \Cref{prop:optimal_dst}, 
the new objective only shifts by a constant, e.g., $\max_{\hat h' \in \hat \hypot} d_{s,t} = D_{\hat{\phi}}(\pzsdistribution || \pztdistribution) + \phi(1)$ with $\hat{\phi}(x) := \phi(x) - \phi(1)$. 
By \Cref{lem:perspective} (Appendix~\ref{sec:understanding_dann_mdd}), we can rescale $\phi^*$, and  $\phi$ will change accordingly.
These can be done for the general family of divergences, accommodating a larger family of distributions. 

\textbf{$\gamma$-weighted JS Divergence.} We recall that the objective from MDD \cite{zhang19bridging} (i.e. the one introduced to deal with the practical issues of the MDD discrepancy) corresponds to the $\gamma$-JS divergence (up to a constant that does not alter optimization).
This result gives insight into the big performance gap observed when comparing MDD vs DANN (see \Cref{sec:understanding_dann_mdd}). That gap is due to the fact that DANN considers the output of the source classifier as a constant (see \cref{sec:revisiting_dann}).
After revisiting DANN (\Cref{eqn:practical_minimax_objective} and \Cref{sec:revisiting_dann}), experimental results (\Cref{tab:gamma_w_analysis}) show that the $\gamma$-weighted-JS divergence only performs comparably to the JS divergence with per-dataset extra-tuning of the $\g$ parameter. A statistical analysis shows that this difference in performance (if any) does not justify the expensive introduction of the new hyperparameter $\g$.

\begin{table*}[htbp]
  \begin{minipage}{0.6\linewidth}
  
  \centering
  \caption{Comparison of the $f$-DAL framework vs DANN on different datasets.}
    \resizebox{\textwidth}{!}{%
    \begin{tabular}{lccccc}
    	\toprule
    \multicolumn{1}{c}{\multirow{3}[0]{*}{Method}} & \multicolumn{4}{c}{Datasets}  & \multicolumn{1}{c}{\multirow{3}[0]{*}{Significance}} \\
          & \multicolumn{1}{c}{Toy} & \multicolumn{1}{c}{NLP} & \multicolumn{2}{c}{Vision} &  \\
          & \multicolumn{1}{l}{Digits} & \multicolumn{1}{l}{Amazon Reviews} & \multicolumn{1}{l}{Office-31} & \multicolumn{1}{l}{Office-Home} &  \\
          \midrule
    DANN \cite{ganin2016domain}  & 93.3  & 76.3  & 82.2  & 57.6  & - \\
    \midrule
    $f$-DAL (JS) & \textbf{96.6}  &   80.0    &  88.8     &   66.8    & $\times  \checkmark \checkmark \checkmark$ \\
    $f$-DAL (Pearson $\chi^2$) & 96.3  & \textbf{81.6}  & \textbf{89.2}  & \textbf{68.3}  &  $\times  \checkmark \checkmark \checkmark$  \\
    \midrule
    \bottomrule
    \end{tabular}%
	}
  \label{tab:fdal_vs_dann_several_datasets}%
\end{minipage} 
\begin{minipage}{0.4\linewidth}
  \centering
  \caption{Comparison of $\g$ weighted divergences}
    \resizebox{\textwidth}{!}{%
    \begin{tabular}{ccccc}
    	\toprule
          & $\g$     & Avg Digits & Avg Office-31 & Avg \\
    
    \midrule
    $f$-DAL (JS) & -     & \textbf{96.6}  & 88.8  & 92.7 \\
    $f$-DAL (Pearson $\chi^2$) & -     & 96.3  & \textbf{89.2}  & \textbf{92.8} \\
    \midrule

    \multirow{3}[0]{*}{\shortstack{$f$-DAL($\g$-JS) \\ MDD }} & 2    & 96.0  & 88.1  & 92.0 \\

          & 3    & \textit{96.3}  & 88.5  & 92.4 \\
          & 4    & 96.2  & \textit{88.9}  & 92.5 \\
    \midrule
    \bottomrule
    \end{tabular}%
	}
  \label{tab:gamma_w_analysis}%

\end{minipage}
\end{table*}%

\section{Experimental Results } 
We now experimentally analyze and compare the proposed framework vs previous adversarial methods.
We perform experiments on both toy datasets (digits) and real-world problems (natural language and visual tasks).

\vspace{-2mm}
\subsection{Setup}

\textbf{Digits.}  We evaluate our method on two digits datasets \textbf{MNIST} and \textbf{USPS} with two transfer tasks (M $\to$ U and U $\to$ M). 
We adopt the splits and evaluation protocol from \cite{long2018conditional} which constitute of  
60,000 and 7,291 training images and the standard test set of 10,000 and  2,007 test images for MNIST and USPS, respectively. 

\textbf{Visual Tasks.} We use two visual benchmarks: \textbf{(1)} the \textit{Office-31} dataset \citep{saenko2010adapting} contains 4,652 images and 31 categories, collected from three distinct domains: Amazon \textbf{(A)}, Webcam \textbf{(W)} and DSLR \textbf{(D)}. \textbf{(2)} the \textit{Office-Home} dataset \citep{venkateswara2017Deep} contains 15,500 images from four different domains: Artistic images, Clip Art, Product images, and Real-world images.

\textbf{NLP Tasks}. 
For this task, we consider the Amazon product reviews dataset \cite{blitzer2006domain} which
contains online reviews of different products collected on the Amazon website. We follow the splits and evaluation protocol from \cite{courty2017joint,pmlr-v119-dhouib20b}. We choose 4 of its subsets corresponding to different product categories, namely: books,
dvd, electronics and kitchen (denoted by B, D, E, K, respectively) and leads to 12 domain adaptation tasks of
varying difficulty. The problem is to predict positive (higher than 3 stars) or negative (3 stars or less) notation of reviews. 
For each task, we use predefined sets of 2000 instances
of source and target data samples for training, and keep
4000 instances of the target domain for testing.

\textbf{Baselines.} Our main baseline is DANN \cite{ganin2016domain}. For the JS divergence, our method can be seen as the revisited interpretation of DANN. We then study whether this interpretation based on our bounds correlates well with experimental results.
We also compare with recent methods such as CDAN \cite{long2018conditional} for Digits and JDOT and MADAOT \cite{courty2017joint,pmlr-v119-dhouib20b} for the NLP benchmark. MDD \cite{zhang19bridging} is the $\g$-JS divergence in our framework, we also use it for comparison in visual tasks where results for the method are available.

\begin{figure}  
  \includegraphics[width=.45\textwidth]{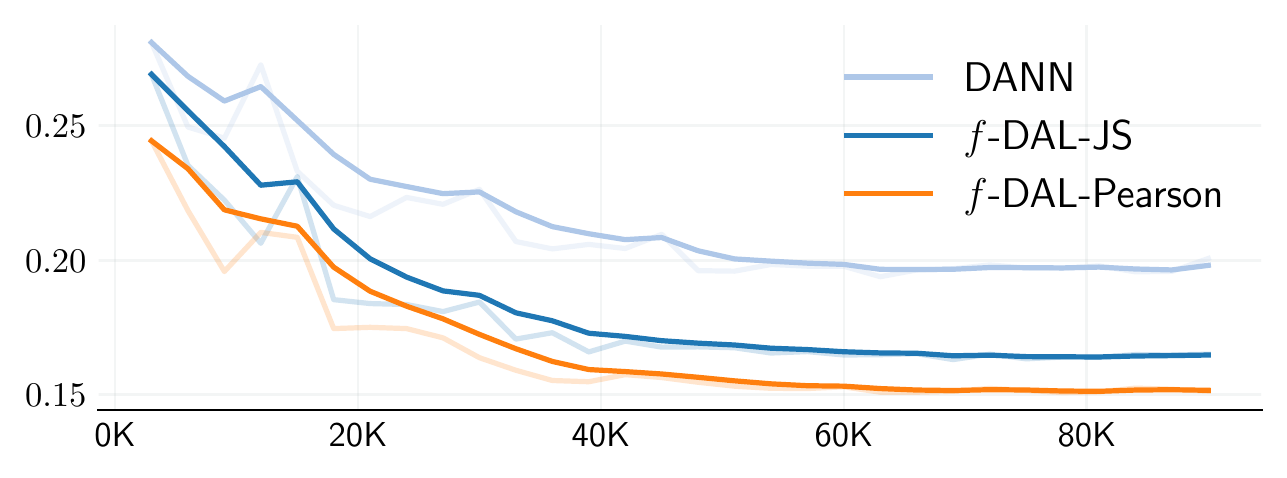}  
  \vspace{-5mm}
  \centering \caption{Target Domain Loss on the Digits Datasets M$\to$ U. }
  \label{fig:mnist_target_domain_loss}
\end{figure}

 \begin{figure}
  \vspace{-3mm}
  \includegraphics[width=0.49\textwidth]{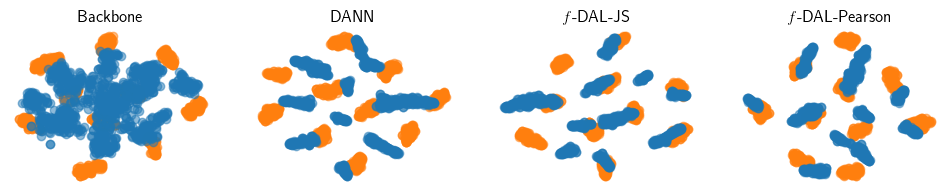}  
  \vspace{-6mm}
  \caption{t-SNE Visualization of the last layer features on the Digits Dataset M$\to$ U. }
  \label{fig:mnist_tsne_vis}
  \vspace{-3mm}
  \end{figure}
  
\textbf{Implementation Details}: We implement our algorithm in PyTorch. 
For the Digits datasets, the implementation details follows \cite{long2018conditional}. 
Thus, the backbone network is LeNet \cite{lecun1998gradient}. The main classifier ($\hat h$) and auxiliary classifier ($\hat h'$)  are both $2$ linear layers  with ReLU non-linearities and Dropout (0.5) in the last layer. 
For the NLP task, we follow the standard protocol from \citet{courty2017joint,ganin2016domain} and use a simple 2-layer model with sigmoid activation function. 
For the visual datasets, we use ResNet-50 \citep{he2016deep} pretrained on ImageNet \citep{deng2009imagenet} as the backbone network.
The main classifier ($\hat h$) and auxiliary classifier ($\hat h'$) are both $2$ layers neural nets with Leaky-ReLU activation functions. We use spectral normalization (SN) as in \cite{miyato2018spectral} only for these two (i.e $\hat h$ and $\hat h'$ ). 
We did not see any transfer improvement by using it.
The reason for this was to avoid gradient issues and instabilities during training for some divergences in the first epochs. For the first two tasks, hyperparameters are determined based on a subset (10\%) of the training set for one task (e.g. M $\to$ U and  B $\to$ D) and kept constant for the others. For the visual tasks, we use the hyperparameters and same training protocol from MDD (\citet{zhang19bridging}). We report the average accuracies over 3 experiments.  Full details are in \Cref{app:more_details_exp_setup}.

\subsection{Experimental Analysis}

	\begin{table*}[t] %
	\addtolength{\tabcolsep}{2pt}
	\centering
	\vspace{-3mm}  
	\centering\caption{Accuracy represented in (\%) with average and standard deviation on the {Office-31} benchmark.} 
	\label{tbl:sota_office31}
	\resizebox{\textwidth}{!}{%
		\begin{tabular}{lccccccc}
			\toprule
			Method                          & A $\rightarrow$ W     & D $\rightarrow$ W     & W $\rightarrow$ D     & A $\rightarrow$ D     & D $\rightarrow$ A     & W $\rightarrow$ A     & Avg           \\
			\midrule
			ResNet-50 \citep{he2016deep}   & 68.4$\pm$0.2          & 96.7$\pm$0.1          & 99.3$\pm$0.1          & 68.9$\pm$0.2          & 62.5$\pm$0.3          & 60.7$\pm$0.3          & 76.1          \\
			DANN \citep{ganin2016domain}   & 82.0$\pm$0.4          & 96.9$\pm$0.2          & 99.1$\pm$0.1          & 79.7$\pm$0.4          & 68.2$\pm$0.4          & 67.4$\pm$0.5          & 82.2          \\
			JAN \citep{long2017deep}         & 85.4$\pm$0.3          & {97.4}$\pm$0.2        & {99.8}$\pm$0.2        & 84.7$\pm$0.3          & 68.6$\pm$0.3          & 70.0$\pm$0.4          & 84.3          \\
			GTA \citep{sankaranarayanan2018generate}       & 89.5$\pm$0.5          & 97.9$\pm$0.3          & 99.8$\pm$0.4          & 87.7$\pm$0.5          & 72.8$\pm$0.3          & 71.4$\pm$0.4          & 86.5          \\
			MCD  \citep{saito2018maximum}   &88.6$\pm$0.2&98.5$\pm$0.1&\textbf{100.0}$\pm$.0&92.2$\pm$0.2&69.5$\pm$0.1&69.7$\pm$0.3&86.5 \\
			CDAN  \cite{long2018conditional}     & 94.1$\pm$0.1          & {98.6}$\pm$0.1 & \textbf{100.0}$\pm$.0 & 92.9$\pm$0.2          & 71.0$\pm$0.3          & 69.3$\pm$0.3          & 87.7          \\
			\midrule

			 $f$-DAL ($\gamma$-JS) / MDD  \citep{zhang19bridging}                &  {94.5}$\pm$0.3 & 98.4$\pm$0.1          &  \textbf{100.0}$\pm$.0 &  {93.5}$\pm$0.2 &  {74.6}$\pm$0.3 &  {72.2}$\pm$0.1 & {88.9} \\
			 \midrule
       Ours ($f$-DAL)  & \textbf{95.4} $\pm$0.7  &   {98.8}$\pm$0.1          & \textbf{100.0}$\pm$.0 &  {93.8 }$\pm$0.4 & \textbf{74.9 }$\pm$1.5 &  {74.2 }$\pm$0.5 & \textbf{89.5}    \\
       \midrule
       Ours ($f$-DAL Pearson) +  Alignment   & 93.4$\pm$0.4  &   \textbf{99.0}$\pm$0.1           & \textbf{100.0}$\pm$.0 &  \textbf{94.8}$\pm$0.6 & {73.6}$\pm$0.2 &  \textbf{74.6}$\pm$0.4 & {89.2}    \\

			\midrule
			\bottomrule
		\end{tabular}

}

\end{table*}

\begin{table*}[ht]
    \vspace{-6mm}
    \caption{Accuracy (\%) on the {Office-Home} benchmark.}
    \label{tbl:sota_officehome}

    \resizebox{\textwidth}{!}{%
    \begin{tabular}{lcccccccccccccc} %
      \toprule
      Method   & Ar$\to$Cl & Ar$\to$Pr & Ar$\to$Rw & Cl$\to$Ar & Cl$\to$Pr  & Cl$\to$Rw & Pr$\to$Ar %
      & Pr$\to$Cl & Pr$\to$Rw & Rw$\to$Ar & Rw$\to$Cl & Rw$\to$Pr & Avg           \\
      \midrule
      ResNet-50 \citep{he2016deep}    & 34.9                   & 50.0                   & 58.0                   & 37.4                   & 41.9                    & 46.2                   & 38.5 
                        & 31.2                   & 60.4                   & 53.9                   & 41.2                   & 59.9                  
       & 46.1          
      \\
      DANN \citep{ganin2016domain}   & 45.6                   & 59.3                   & 70.1                   & 47.0                   & 58.5   
                       & 60.9                   & 46.1                    & 43.7                   & 68.5                   & 63.2                   & 51.8                   & 76.8                   
      & 57.6         
       \\
      JAN \citep{long2017deep}         & 45.9                   & 61.2                   & 68.9                   & 50.4                   & 59.7  
                       & 61.0                   & 45.8                    & 43.4                   & 70.3                   & 63.9                   & 52.4                   & 76.8                   
      & 58.3          
      \\
      CDAN \citep{long2018conditional}     & 50.7                   & 70.6                   & 76.0                   & 57.6                   & 70.0      
                    & 70.0                   & 57.4                    & 50.9                   & 77.3                   & 70.9                   & 56.7                   & 81.6                  
                    & 65.8         
       \\
       \midrule
        $f$-DAL ($\gamma$-JS) / MDD \citep{zhang19bridging}              & {54.9}          &   {73.7}          &  {77.8}          &  {60.0}          &  {71.4}   
                &  {71.8}          &  {61.2}           &  {53.6}          &  {78.1}          &  {72.5}          &  {60.2}          & {82.3}          
              & {68.1}

      \\
      \midrule
      Ours ($f$-DAL)  &54.7 & 71.7& 77.8&  {61.0}&  \textbf{72.6}& 72.2 &  60.8 &  53.4 &  80.0 &  \textbf{73.3} &  \textbf{60.6} &  \textbf{83.8}&  \underline{68.5} \\
      \midrule
       Ours ($f$-DAL - Pearson) + Alignment &  \textbf{56.7} & \textbf{77.0} &  \textbf{81.1} &  \textbf{63.1}&  72.2&  \textbf{75.9} & \textbf{64.5} &  \textbf{54.4}  & \textbf{81.0} & 72.3 &  58.4 &  83.7 & \textbf{70.0} \\

       \midrule
      \bottomrule
    \end{tabular}%
  }
  
\end{table*}

\begin{table*}[h!]
  \centering
  \vspace{-6mm}
  \caption{Accuracy on the Amazon Reviews data sets}
  \resizebox{\textwidth}{!}{%
    \begin{tabular}{lrrrrrrrrrrrrr}
    	\toprule
         Method & \multicolumn{1}{l}{B$\to$D} & \multicolumn{1}{l}{B$\to$E} & \multicolumn{1}{l}{B$\to$K} & \multicolumn{1}{l}{D$\to$B} & \multicolumn{1}{l}{D$\to$E} & \multicolumn{1}{l}{D$\to$K} & \multicolumn{1}{l}{E$\to$B} & \multicolumn{1}{l}{E$\to$D} & \multicolumn{1}{l}{E$\to$K} & \multicolumn{1}{l}{K$\to$B} & \multicolumn{1}{l}{K$\to$D} & \multicolumn{1}{l}{K$\to$E} & \multicolumn{1}{l}{Avg} \\
          \midrule
    JDOTNN \cite{courty2017joint} & 79.5  & 78.1  & 79.4  & 76.3  & 78.8  & 82.1  & 74.9  & 73.7  & 87.2  & 72.8  & 76.5  & 84.5  & 78.7 \\
    MADAOT \cite{pmlr-v119-dhouib20b} & 82.4  & 75.0    & 80.4  & \textbf{80.9}  & 73.5  & 81.5  & \textbf{77.2}  & 78.1  & \textbf{88.1}  & 75.6  & 75.9  & 87.1  & 79.6 \\
   \midrule
    DANN \cite{pmlr-v119-dhouib20b,ganin2016domain}  & 80.6  & 74.7  & 76.7  & 74.7  & 73.8  & 76.5  & 71.8  & 72.6  & 85.0    & 71.8  & 73.0    & 84.7  & 76.3 \\
    \midrule
    Ours ($f$-DAL) & \textbf{84.0}  & \textbf{80.9}  & \textbf{81.4}  &  {80.6}  & \textbf{81.8}  & \textbf{83.9}  & {76.7}  & \textbf{78.3}  & {87.9}  & \textbf{76.5}  & \textbf{79.5}  & \textbf{87.5}  & \textbf{81.6} \\
    \midrule
    \bottomrule
    \end{tabular}%
  \label{tbl:amazon_nlp_shallownet}%
}
\end{table*}%

\begin{table}[h!]
  \centering
  \vspace{-4mm}
  \caption{Accuracy on the Digits datasets}
    
    \begin{tabular}{cccc}
    \toprule
     Method     & M$\to$U  & U$\to$M  & Avg \\
      \midrule
   
    DANN \cite{ganin2016domain}  & 91.8  & 94.7  & 93.3 \\
    CDAN \cite{long2018conditional}  & 93.9  & 96.9  & 95.4 \\
    \midrule
    Ours ($f$-DAL) & \textbf{95.3} &  \textbf{97.3} &  \textbf{96.3} \\
    \midrule
    \bottomrule
    \end{tabular}%
     \vspace{-4mm}
  \label{tbl:digits_table_acc}%
\end{table}%

\begin{figure} %
    \centering
    \includegraphics[width=0.45\textwidth,trim=10 15 25 10, clip]{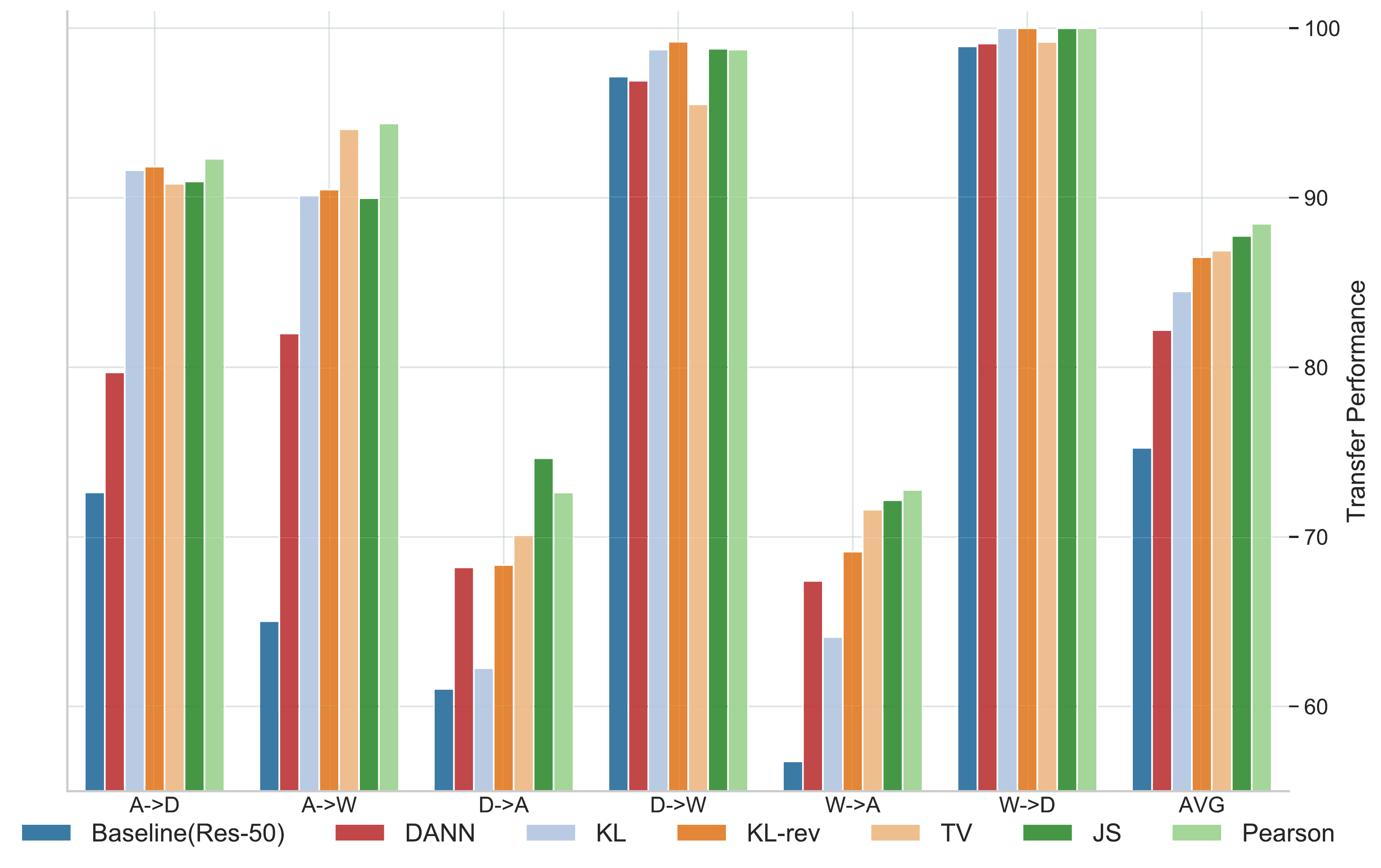}
    \vspace{-3mm}
    \caption{Transfer performance of a model trained using $f$-DAL  for different $f$-divergences and transfer tasks on 
    {Office-31}. Baseline is  {ResNet-50} source only. 
    We show the performance of DANN (Table~\ref{tbl:sota_office31}). When compared with $f$-DAL (JS), a performance boost is observed.
    This is in line with our bounds which suggest the use of a per-category domain classifier vs a discriminator.
}
    \label{fig:the_choice_of_f_div}
\end{figure}

\textbf{Revisited DANN.} 
We now compare the performance of $f$-DAL (JS) vs DANN on the four datasets. 
In this scenario, $f$-DAL (JS) is the corrected version of DANN as discussed in \Cref{sec:revisiting_dann}. 
We can see that $f$-DAL (JS) always outperforms DANN. To further corroborate the statistical significance of this, we conducted a two sided Wilcoxon signed rank test. 
With the exception of the Digits datasets (for which performance is beyond 90\%),  $f$-DAL (JS) is statistically significantly better than DANN (5\% significance, 95\% confidence, \Cref{tbl:significancetestfdaljs}). For the digits dataset, we provide training losses in the target domain in  Fig. \ref{fig:mnist_target_domain_loss} and t-SNE \cite{maaten2008visualizing} visualizations of the last layer input (perplexity=30) in Fig. \ref{fig:mnist_tsne_vis}. $f$-DAL (JS) converges faster and the resulting features are also  better aligned.

\textbf{Comparing $f$-divergences.} 
We compare the performance of $f$-divergences on \textit{Office-31}.
Specifically, we  evaluate the model on the six combinations of  transfer tasks with different divergences.
All hyperparameters are kept constant for all divergences in this experiment. 
As shown in Figure~\ref{fig:the_choice_of_f_div}, the JS and Pearson $\chi^2$ divergences achieve the best results,  with 
the \textit{Pearson $\chi^2$ achieving the best overall result among all the transfer tasks on this benchmark}. 
This is also the case for the Digits, NLP and Office-Home datasets.
It is worth noting that this divergence was never used before to learn invariant representations in the context of  DA.
The excellent performance of $\chi^2$ is also reminiscent of histogram-based (visual) bags of words representations that were shown to work better with $\chi^2$ distances than with $\ell_2$ and $\ell_1$ distances for image and text classification tasks \citep{li2013sign}.

\textbf{Comparing $\g$-weighted divergences.} We now investigate the significance of introducing the hyper-parameter $\g$ to define the $\g$-weighted divergences.  
We compare in~\Cref{tab:gamma_w_analysis} the performance of using $\g$-JS vs JS and Pearson in two benchmarks: (1) Digits and (2) Office-31. The $\g$-JS divergence only outperforms the JS after tuning the hyperarameter $\g$. The difference is only of $0.1 \%$ in average in the Office-31 dataset giving a p-val=0.89 using the Wilcoxon signed rank test. 
This means that after correction with our framework DANN/$f$-DAL-JS is as good as $\g$-JS without additional hyperparameter tuning.  In general, we found the use Pearson $\chi^2$ divergence gives slightly better numerical results. 

\textbf{Training Dynamics.}  Fig. \ref{fig:mnist_target_domain_loss} and Fig. \ref{fig:mnist_lhat_val} illustrate the target loss curves and the values of $\hat \ell$ for JS and Pearson, respectively. In both cases our framework converges faster and achieves lower cost (see \Cref{fig:mnist_target_domain_loss}). \Cref{fig:mnist_lhat_val} illustrates the value of $\hat \ell$  for both source and target where  $\hat \ell \approx \phi'(1)=0 $, which implies $\pzsdensity \approx \pztdensity$ (\Cref{prop:optimal_dst}) as desired.  It is worth noting that while this is true in both cases, domain invariance is achieved faster (almost after the first epoch) with the Pearson $\chi^2$. This could also give intuition about the noticeable performance gap while using this divergence. 

  \begin{figure}
  \includegraphics[width=.5\textwidth]{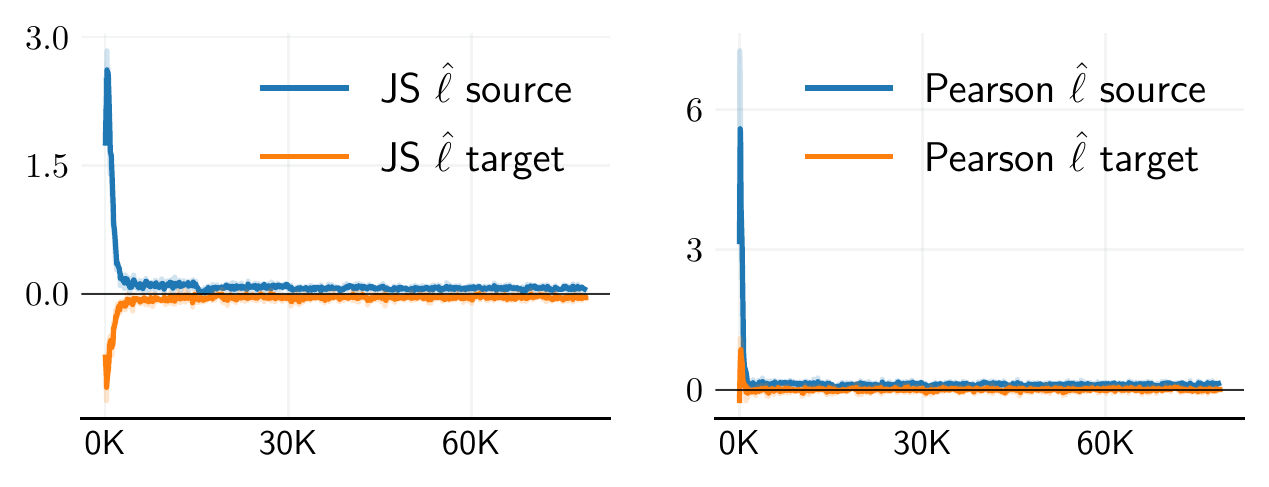} 
  \vspace{-10mm}
  \caption{Values of $\hat \ell(\hat h',\hat h)$ for source and target on Digits M$\to$ U. 
  $\hat \ell \approx \phi'(1)=0 $, which implies $\pzsdensity \approx \pztdensity$ (see \Cref{prop:optimal_dst})}
  \label{fig:mnist_lhat_val}
  \vspace{-5mm}
  \end{figure}

\textbf{Results.} We compare our  method vs.~recent state-of-the-art domain adversarial approaches in \Cref{tbl:digits_table_acc,tbl:amazon_nlp_shallownet,tbl:sota_office31,tbl:sota_officehome}.
\textit{Ours} in the tables correspond to $f$-DAL using the Pearson $\chi^2$ divergence, with the exception of D $\to$ W and D $\to A$ in \Cref{tbl:sota_office31}, and Ar $\to$ Pr in \Cref{tbl:sota_officehome} where we use JS divergence.  A detailed version of these with every divergence's performance can be found in \Cref{app:add_experimental_results}.
In all cases, our approach outperforms previous methods, including MDD which is also included in our framework (\Cref{sec:gamma_weighted_div}), and requires tuning of the hyperparameter $\g$. 
What is most impressive is that, unlike our approach, some methods listed in the tables can be interpreted as DANN + additional techniques to improve their performance (i.e. CDAN).
It would be interesting to see if these techniques still introduce gains after correcting DANN (i.e. $f$-DAL JS) or if they were necessary because of the disconnect between theory and algorithms. 

\textbf{Improving $f$-DAL with Sampling-Based Alignment.}
In this experiment, we show that if the distance between the label marginals is not negligible $f$-DAL is still effective and can simply be combined with SoTA methods that deal with the label shift such as \citet{pmlr-v119-jiang20d}. 
We refer to this in \Cref{tbl:sota_office31,tbl:sota_officehome} as  ``+Alignment.''
For this experiment, we follow the setting from \citet{pmlr-v119-jiang20d} but replace the adversarial method for $f$-DAL-Pearson. We also remove their masking scheme as we did not find it necessary with $f$-DAL.   
Clearly, in the Office-31 dataset (\Cref{tbl:sota_office31}) the distance between the label marginals is not significantly different and we did not see any improvement by introducing implicit alignment. 
This is in contrast with~\Cref{tbl:sota_officehome} (Office-Home dataset) where  our method notably benefits from the sampling-based alignment scheme.
This again showcases the versatility of $f$-DAL.
We refer to~\Cref{app:label_shift} for more details and experiments on label-shift.

\section{Related Work}
\textbf{Theory.}
The domain adaptation problem has been rigorously investigated in \cite{ben2007analysis,ben2010theory,mansour2009domain,zhao2019learning,zhang19bridging} where a classifier’s target error is bounded 
in terms of its source error and the divergence between the two domains.
We propose a measure of discrepancy between distributions based 
on 
a  variational characterization of $f$-divergences. 
Our method includes the $\hypot \Delta \hypot$-divergence as a particular case but also  other divergences used in practice.
Moreover, our bounds based on $f$-divergences allow us to connect theory and practical algorithms
without 
surrogate objectives.

\textbf{Domain-Adversarial Algorithms.} 
\citet{ganin2016domain} introduced domain-adversarial training with insights from 
\citet{ben2010theory}. 
 This algorithm has been heavily adopted in the context of neural networks
\cite{ long2018conditional,hoffman2018cycada, zhang19bridging}.  
We propose a general adversarial framework for the family of $f$-divergences based on our bounds.
We show how to correct the training algorithm from \citet{ganin2016domain}, and how to incorporate a large family of $f$-divergences.
We explain why MDD 
 \cite{zhang19bridging} outperforms \citet{ganin2016domain} and show how the gap vanishes after correction.

\textbf{Variational $f$-divergences.} \citet{nguyen2010estimating} propose a derivation of  the variational characterization of $f$-divergences that was later used for GANs \cite{lee2016fgan}. These were used in the context of DA in an example in \citet{wu2019domain} to
rewrite the domain-regularizer from \citet{ganin2016domain}.
We derive $f$-divergence based generalization bounds from which we derive an algorithmic framework different from \citet{ganin2016domain}. 
Our analysis shows how to correct DANN.
Morever, experimental results showing the performance of $f$-divergences in the context of domain-adversarial learning has not been provided.

\section{Conclusions}
We have provided a novel perspective on the domain-adversarial problem by deriving a general domain adaptation framework.
Our bounds are based on a variational characterization of $f$-divergences and recover the theoretical results from seminal works as a special case, and also support divergences typically used in practice.
We have derived a general algorithmic framework that is practical for neural networks. 
It allows us to reinterpret and correct the original domain-adversarial training method.
We also show 
 through large-scale experiments that several $f$-divergences can be used to minimize the discrepancy between source and target domains. We showed that some divergences that do not require additional techniques and/or hyperparameter tuning can help achieve state-of-the-art performance.

\textbf{Acknowledgements.} We would like to thank Rafid Mahmood, Mark Brophy and the anonymous reviewers for helpful discussions and feedback on earlier versions of this manuscript.   

\small
\FloatBarrier
\bibliography{main}
\bibliographystyle{icml2021}
\newpage
\normalsize
\appendix

\newpage
\onecolumn
\icmltitle{Supplementary Material\\ $f$-Domain-Adversarial Learning: Theory and Algorithms}

\section{Divergences between probability measures} \label{app:divergences}

As explained above, the difference term between source and target domains is important in bounding the target loss. We now provide more details about the $\Hc\Delta\Hc$-divergence and $f$-divergences that are used to compare both domains. 

\paragraph{$\Hc\Delta\Hc$-divergence} The $\Hc$-divergence is a restriction of total variation. For binary classification, define $I(h) := \{ \xv\in \Xc: h(\xv) = 1\}$, then the $\Hc$-divergence between two measures $\mu$ and $\nu$ given the hypothesis class $\Hc$ is \citep{ben2010theory}:
\be\label{eq:hdiv}
d_\Hc(\mu, \nu) = 2\sup_{h\in \Hc} |\mu(I(h)) - \nu(I(h))|.
\en
Define $\Hc\D\Hc := \{h\oplus h': h, h'\in \Hc\}$ ($\oplus$: XOR), then $d_{\Hc\D\Hc}(\mu, \nu)$ can be used to bound the difference between the source and target errors. $\Hc\D\Hc$ divergence has been extended to general loss functions \citep{mansour2009domain} and marginal disparity discrepancy \citep{zhang19bridging}.

\paragraph{$f$-divergence} Given two measures $\mu$ and $\nu$ with $\mu \ll \nu$ ($\mu$ absolute continuous w.r.t.~$\nu$), the $f$-divergence $D_\phi(\mu || \nu)$ is defined as \citep{csiszar1967information, ali1966general}:
\be\label{eq:fdiv}
D_\phi(\mu \parallel \nu) = \int \phi\left(\frac{d\mu}{d\nu}\right)d\nu,
\en
where ${d\mu}/{d\nu}$ is known as the Radon--Nikodym derivative \citep[e.g.][]{billingsley2008probability}. Assume $\phi$ is convex and lower semi-continuous, then from the Fenchel--Moreau theorem, $\phi^{**} = \phi$, with $\phi^*$ known as the Fenchel conjugate of $\phi$:
\be\label{eq:conjugate}
\phi^*(\yv) = \sup_{\xv\in \dom \phi} \langle \xv, \yv \rangle - \phi(\xv),     
\en
which is convex since it is a supremum of an affine function. In order for $\xv$ to take the supremum, it is necessary and sufficient that $\yv\in \partial \phi(\xv)$ using the stationarity condition. Therefore, with \eqref{eq:fdiv} and \eqref{eq:conjugate}, $D_\phi(\mu \parallel \nu)$ can be written as:
\be
D_\phi(\mu \parallel \nu) = \sup_{T\in \mathcal{T}}\E_{X\sim \mu}[T(X)] - E_{Z\sim \nu}[\phi^*(T(Z))],
\en
where $\mathcal{T} = \{T: T\mbox{ is a measurable function and } T: \Xc \to \dom \phi^*\}$. 
In practice we restrict $\mathcal{T}$ to a subset as in \Cref{def:disc1}.
For different choices of $\phi$ see \Cref{tbl:choices_f_diver_supp}.  

\cite{nguyen2010estimating} derive a general variational method to estimate $f$-divergences given only samples. 
\cite{lee2016fgan} extend their method from merely estimating a divergence for a fixed model
to estimating model parameters. 
While our method  builds on this variational formulation, we use it in the context of domain adaptation.

\begin{table*}[tb] 

\begin{center}
\small
\resizebox{\textwidth}{!}{%
  \begin{tabular}{lllllll}
  \toprule
  Divergence & $\phi(x)$  & $\phi^*(t)$ & $ \phi'(1) $& $g(x)$ %
  \\ \midrule
  MDD
  & $x \log \frac{\g x}{1 + \g x} + \frac{1}{\g}\log \frac{1}{1 + \g x}$
  & $-\log ( 1 - e^t)/\g$
  & $\log \frac{\g}{1+\g}$
  & $\log x$\\
    Kullback-Leibler (KL)
  & $x \log x$
  & $\exp(t-1)$
  & $1$
  & $x$
  \\
  Reverse KL (KL-rev)
  & $\scalebox{0.9}[1.0]{-} \log x$
  & $\scalebox{0.9}[1.0]{-}1-\log (-t)$
  & $-1$
  &  $-\exp x$
  \\
  Jensen-Shannon (JS)
  & $\scalebox{0.9}[1.0]{-} (x+1) \log \frac{1+x}{2}+x \log x$
  & $\scalebox{0.9}[1.0]{-} \log(2-e^t)$
  & $0$
  & $\log \frac{2}{1+\exp(-x)}$
  \\
   Pearson $\chi^2$ 
  & $(x-1)^2$ 
  & $t^2/4 + t$ 
  & $0$ 
  & $x$
  \\
  Squared Hellinger (SH)
  & $(\sqrt{x}-1)^{2}$
  & $\frac{t}{1-t}$
  & $0$
  &  $1 - \exp x$
  \\
  $\g$-weighted Pearson $\chi^2$ 
  & $(\g x-1)^2/\g$ 
  & $(t^2/4 + t)/\g$ 
  & $0$ 
  & $x$
  \\
  Neynman $\chi^2$ 
  & $\frac{(1-x)^2}{x}$ 
  & $2 - 2 \sqrt{1-t}$ 
  & $0$ 
  & $1 - \exp x$ 
  \\
  $\g$-weighted total variation 
  & $\frac{1}{2\g}|\g x-1|$
  & $(t/\g) \one_{-1/2\leq t \leq 1/2}$
  & $[-1/2, 1/2]$
  & $\frac{1}{2}\tanh x$
  \\
  
  Total Variation (TV) 
  & $\frac{1}{2}|x-1|$
  & $ \one_{-1/2\leq t \leq 1/2}$
  & $[-1/2, 1/2]$
  & $\frac{1}{2}\tanh x$\\
  \bottomrule

  \end{tabular}
}%
\end{center}
\vspace{-3mm}
\caption{  Popular $f$-divergences, their conjugate functions and choices of $g$. We take $\hat{l}(a, b) = g(b_{\argmax\, a})$.
}

\label{tbl:choices_f_diver_supp}
\end{table*}

\section{Proofs}

In this section, we provide the proofs for the different theorems and lemmas:

\TotalVar*
\begin{proof}
Rewriting the target loss we have:
\begin{align*}
    \centering
    \riskTlh  &=\riskTlh  - R^{\ell}_S(h,\labelingf_t) + R^{\ell}_S(h,\labelingf_t)  -  \riskSlh + \riskSlh,\\ 
     &\leq \riskSlh + |\riskSlh-R^{\ell}_S(h,\labelingf_t)| +|\riskTlh-R^{\ell}_S(h,\labelingf_t)| \\
\end{align*}
where:
\begin{align*}
    |\riskSlh-R^{\ell}_S(h,\labelingf_t)| &= |R^{\ell}_S(h,\labelingf_s)-R^{\ell}_S(h,\labelingf_t)| \\
    &=|\E_{x \sim \Ps}[|h(x)-\labelingf_t(x)|-|h(x)-\labelingf_s(x)|]|\\
    &\leq \E_{x \sim \Ps}[|\labelingf_t(x)-\labelingf_s(x)|]
\end{align*}
and: %
\begin{align*}
    |\riskTlh-R^{\ell}_S(h,\labelingf_t)| &= |R^{\ell}_T(h,\labelingf_t)-R^{\ell}_S(h,\labelingf_t)| \\
    &\leq \int |\ptdensity(x)-\psdensity(x)|\cdot |h(x)-\labelingf_t(x)|dx \\
    &\leq  \int | \big ( \frac{\ptdensity(x)}{\psdensity(x)}-1 \big ) \psdensity(x)| dx = D_{\phi}(\Ps||\Pt)
\end{align*}
with $\phi(x)=|x-1|$ which represents the total divergence. 
\end{proof}

\Divergence*
 \begin{proof}
\begin{align}
\fHdiscrepancy(\Ps || \Pt) &= \sup_{h\in \Hc} \fhHdiscrepancy(\Ps || \Pt) \geq \fhHdiscrepancy(\Ps || \Pt) \\ %
&= \sup_{h'\in \Hc} |E_{x\sim \Ps} [\ell(h(x),h'(x))] - \E_{x\sim \Pt}[\phi^{*}(\ell(h(x),h'(x)))]| \\ %
& \geq |E_{x\sim \Ps} [\ell(h(x),h'(x))] - \E_{x\sim \Pt}[\phi^{*}(\ell(h(x),h'(x)))]| \\  %
  &=|R^{\ell}_S(h,h')-R^{\phi^* \circ \ell}_T(h,h')|.
\end{align}

For the rightmost inequality in \eqref{eq:chain}, it is well-known that $f$-divergence $D_\phi$ is nonnegative \citep[e.g.][]{sason2016f}, and thus
\be
D_\phi(\Ps \| \Pt) = \sup_{T\in \mathcal{T}} |\E_{x\sim \Ps} T(x) - \E_{x\sim \Pt} \phi^*(T(x))|.
\en
Restricting $\mathcal{T}$ to $\mathcal{\hat T}$ as in \Cref{def:disc1} we obtain $D_\phi(\Ps \| \Pt) \geq \fHdiscrepancy(\Ps || \Pt)$. 
 \end{proof}

\DivergenceRadamacher*
\begin{proof} 
 For reference, we refer the reader to Chapter 3 of \cite{mohri2018foundations}. Using the notations of $R$ and $\hat{R}$ that represent the true and empirical risks, we have:
\begin{align}
\fhHdiscrepancy(\Ps || \Pt) - \fhHdiscrepancy( \sourcedataset ||\targetdataset ) 
&= \sup_{h' \in \hypot} \{ | R^{\ell}_S(h,h')-R^{\phi^* \circ \ell}_T(h,h')| \} \\ \nonumber 
&- \sup_{h' \in \hypot} \{ |\hat R^{\ell}_S(h,h')-\hat R^{\phi^* \circ \ell}_T(h,h')  |\} \\ \nonumber
& \leq \sup_{h' \in \hypot} | |R^{\ell}_S(h,h')-R^{\phi^* \circ \ell}_T(h,h')| - |\hat R^{\ell}_S(h,h') -  \hat R^{\phi^* \circ \ell}_T(h,h')| | \\ \nonumber
& \leq \sup_{h' \in \hypot} |  R^{\ell}_S(h,h')-R^{\phi^* \circ \ell}_T(h,h')  -  \hat R^{\ell}_S(h,h') +  \hat R^{\phi^* \circ \ell}_T(h,h')|  \\ \nonumber
& = \sup_{h' \in \hypot}   | R^{\ell}_S(h,h')-  \hat R^{\ell}_S(h,h')|+|    R^{\phi^* \circ \ell}_T(h,h') -\hat R^{\phi^* \circ \ell}_T(h,h')  |  \\ \nonumber
& \leq 2 \mathfrak{R}_{\Ps}(\ell \circ \hypot) + \sqrt{\frac{\log{\frac{1}{\delta}}}{2 n}} +2 \mathfrak{R}_{\Pt}(\phi^* \circ \ell \circ \hypot) + \sqrt{\frac{\log{\frac{1}{\delta}}}{2 n}}
\end{align}

where: 
$| R^{\ell}_S(h,h')- \hat R^{\ell}_S(h,h') | \leq 2 \mathfrak{R}_{\Ps}(\ell \circ \hypot) + \sqrt{\frac{\log{\frac{1}{\delta}}}{2 n}}$ (Theorem 3.3 of \cite{mohri2018foundations}).
Similarly, by Talagrand's lemma (Lemma 5.7 and Definition 3.2 of \cite{mohri2018foundations}) we have:
$\mathfrak{R}_{\Pt}(\phi^* \circ \ell \circ \hypot) \leq \textrm{L} \mathfrak{R}_{\Pt}(\ell \circ \hypot)$, with $\phi^*\circ \ell\circ \Hc := \{x\mapsto \phi(\ell(h(x), h'(x))): h, h'\in \Hc\}$.
\end{proof}

\GenBoundF*
\begin{proof}
We first introduce the following lemma for our proof:
\begin{lem}\label{lem:greater_than_id}
For any function $\phi$ that satisfies $\phi(1) = 0$ we have $\phi^*(t) \geq t$ where $\phi^*$ is the Fenchel conjugate of $\phi$.
\end{lem}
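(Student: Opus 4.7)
The claim $\phi^*(t) \geq t$ should follow immediately from the definition of the Fenchel conjugate together with the normalization hypothesis $\phi(1) = 0$. Recall that for a convex (lower semicontinuous) $\phi$ with domain containing $\R_+$ (as assumed in Definition~\ref{def:fdivergence}), the conjugate is
\begin{equation*}
\phi^*(t) = \sup_{x \in \dom \phi} \{ xt - \phi(x) \}.
\end{equation*}
Since $1 \in \dom \phi$ (because $\phi(1)=0$ is well-defined and finite), the point $x = 1$ is admissible in the supremum.

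The plan is therefore to simply lower-bound the supremum by its value at $x=1$. Evaluating the expression inside the supremum at this single point gives $1 \cdot t - \phi(1) = t - 0 = t$. Taking the supremum over all admissible $x$ can only increase this, so $\phi^*(t) \geq t$ for every $t \in \dom \phi^*$.

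There is essentially no obstacle: the argument is a direct one-line application of the definition of the Fenchel conjugate, exploiting exactly the normalization $\phi(1) = 0$ that is already assumed in Definition~\ref{def:fdivergence}. The only subtlety worth mentioning in the write-up is to note that $1$ lies in the effective domain of $\phi$ (so the substitution is legitimate), which is guaranteed by the standing assumption on $\phi$.
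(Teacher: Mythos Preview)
Your proposal is correct and matches the paper's proof essentially verbatim: the paper also lower-bounds the supremum in the definition of $\phi^*$ by evaluating at $x=1$ and uses $\phi(1)=0$ to conclude $\phi^*(t)\geq t$.
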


\begin{proof}
From the definition of Fenchel conjugate, $\phi^*(t) = \sup_{x\in \dom \phi} (x t - \phi(x)) \geq t - \phi(1) = t$.
\end{proof}

\begin{align}
R^{\ell}_T(h,\labelingf_t) &\leq R^{\ell}_T(h,h^*) + R^{\ell}_T(h^*,\labelingf_t) & \text{(triangle inequality $\ell$) }\\
&= R^{\ell}_T(h,h^*) + R^{\ell}_T(h^*,\labelingf_t) - R^{ \ell}_S(h,h^*)+R^{   \ell}_S(h,h^*) \\
&\leq  R^{\phi^*\circ \ell}_T(h,h^*) - R^{ \ell}_S(h,h^*) + R^{  \ell}_S(h,h^*) +R^{\ell}_T(h^*,\labelingf_t) & \text{(Lemma~\ref{lem:greater_than_id})} \\
&\leq  |R^{\phi^* \circ \ell}_T(h,h^*) - R^{ \ell}_S(h,h^*)| + R^{  \ell}_S(h,h^*) +R^{\ell}_T(h^*,\labelingf_t)  \\
&\leq \fhHdiscrepancy(\Ps || \Pt) + R^{  \ell}_S(h,h^*) +R^{\ell}_T(h^*,\labelingf_t)  & \text{(Lemma~\ref{lem:upper_lower_bound_fdiv})} \\  
&\leq \fhHdiscrepancy(\Ps || \Pt) +  R^{  \ell}_S(h,\labelingf_s) + \underbrace{ R^{  \ell}_S(h^*,\labelingf_s) +R^{\ell}_T(h^*,\labelingf_t)}_{\lambda^*}. 
\end{align}~
\end{proof}

\Radamacher*

\begin{proof}
We show in the following that: \begin{align}
R^{\ell}_T(h) &\leq  
   \hat R^{\ell}_S( h) + \fhHdiscrepancy(\sourcedataset ||\targetdataset) %
  + \hat \lambda^*_\phi \\ 
  &~~~~~~~~~~~~~~~~+ 6\mathfrak{R}_{S}(\ell \circ \hypot)  +  2(1+L) \mathfrak{R}_{T}(\ell \circ \hypot) + 5 \sqrt{(-\log{\delta})/(2 n)}. %
\end{align}

This follows from Theorem~\ref{thm:general_bound} where: $
R^{\ell}_T(h) \leq  R^{\ell}_S( h) +  \fhHdiscrepancy(\Ps || \Pt) +  R^{\ell}_S( h^*) +  R^{\ell}_T( h^*)$.
We also have:  $| R^{\ell}_D(h)- \hat R^{\ell}_D(h) | \leq 2 \mathfrak{R}_{D}(\ell \circ \hypot) + \sqrt{\frac{\log{\frac{1}{\delta}}}{2 n}}$ (Theorem of 3.3 \cite{mohri2018foundations}). From Lemma~\ref{lemma_fhh_from_finite_samples},  $\fhHdiscrepancy(\Ps||\Pt)   \leq
2 \mathfrak{R}_{\Ps}(\ell \circ \hypot)  +2 \textrm{L} \mathfrak{R}_{\Pt}(\ell \circ \hypot) + 2 \sqrt{\frac{\log{\frac{1}{\delta}}}{2 n}}$. Plugging in and rearranging  gives the desired results. 
\end{proof}

\OptSol*
\begin{proof}
We first rewrite from the definition of $d_{s,t}$ in \eqref{eqn:minmax_optimization_objective}:
\begin{align}
d_{s,t}&=\E_{z \sim \pzsdensity } [\hat \ell(\hat h'(z),\hat h(z))]- \E_{z  \sim \pztdensity } [(\phi^* \circ \hat \ell)( \hat h'(z),\hat h(z))] \\
&=  \int [\pzsdensity(z) \hat \ell(\hat h'(z),\hat h(z))- \pztdensity (z) (\phi^* \circ \hat \ell)( \hat h'(z),\hat h(z))] dz \\
&=  \int \pztdensity(z) \left[\frac{\pzsdensity(z)}{\pztdensity(z)}\hat \ell(\hat h'(z),\hat h(z))- (\phi^* \circ \hat \ell)( \hat h'(z),\hat h(z))\right] dz.
\end{align}
Maximizing w.r.t $h'$ and assuming $\hat \hypot$ is unconstrained we have: $\frac{\pzsdensity(z)}{ \pztdensity (z) } \in (\partial \phi^*) (\hat \ell( \hat h'(z),\hat h(z))$ for any $z\in \supp(\pztdensity)$. %
From the definition of Fenchel conjugate we have:
$$x \in \partial \phi^*(t) \iff  \phi(x)+\phi^*(t) = xt \iff \phi'(x) = t.$$
Plugging $x = {\pzsdensity(z)}/{\pztdensity (z)}$ and $t = \ell(\hat h'(z),\hat h(z))$ we obtain
$\ell(\hat h'(z),\hat h(z)) = \phi'({\pzsdensity(z)}/{\pztdensity (z)})$.
Hence, from the definition of $f$-divergences (\Cref{def:fdivergence}) and its variational characterization (eq.~\ref{eq:sup_measurable_function}), we write:~
\be \max_{\hat h' \in \hat \hypot} d_{s,t}= D_\phi(\Ps^z || \Pt^z).\en
\end{proof}

\section{Connection to previous frameworks}\label{sec:understanding_dann_mdd}

In this appendix we show that $f$-DAL encompasses previous frameworks on domain adaptation, including $\Hc\Delta\Hc$-divergence, DANN \citep{ganin2016domain} and MDD \citep{zhang19bridging}.
\subsection{$\Hc\Delta\Hc$-divergence}
\label{sec:generalizing_ben2010theory}
We now show that Theorem~\ref{thm:general_bound} generalizes the bound proposed in \cite{ben2010theory}. Let the pair $\{\phi(x),\phi^*(t)\}=\{ \frac{1}{2}|x-1|,t \}$ for $ t \in [0,1]$, such that $\fhHdiscrepancy = \TVhHdiscrepancy$ and $\sup_{h\in \hypot}\TVhHdiscrepancy=\TVHdiscrepancy =\frac{1}{2} d_{\hypot \Delta \hypot}$, with $d_{\hypot \Delta \hypot}$ defined in \cite{ben2010theory} (see also \eqref{eq:hdiv}).
\Cref{thm:general_bound} gives us that $R^{\ell}_T(h) \leq  R^{\ell}_S( h) + \frac{1}{2} d_{\hypot \Delta \hypot}  +\lambda^*$, recovering Theorem 2 of \cite{ben2010theory}. 

\subsection{DANN formulation and JS divergence}

 The DANN formulation by \citet{ganin2015unsupervised} can also be incorporated in our framework if one takes 
 $\hat{\ell}(\hat h' \circ g (x), e_1) = \log \sigmoid(e_1 \cdot  \hat h' \circ g (x))    $ and $\phi^*(t) = -\log (1 - e^t)$, where $\sigmoid(x):=\frac{1}{1+\exp(-x)}$ is the sigmoid function, and $e_1$ corresponds to the standard basis vector. Reinterpreting $\hat h ':= e_1 \cdot \hat h'$, sustituting and computing $d_{s,t}$ we obtain:

 \begin{align}
d_{s,t}&=  \E_{x_s \sim \psdensity} \log \sigmoid \circ  \hh'\circ g(x_s)  +\E_{x_t \sim \ptdensity} \log \left(1-\sigmoid \circ \hh'\circ g(x_t) \right) \label{DANN_Discrepancy} \\
& =  - \left[ \E_{x_s \sim \psdensity} \log\frac{1}{ \sigmoid \circ  \hh'\circ g(x_s)}  +\E_{x_t \sim \ptdensity} \log \frac{1}{  1-\sigmoid \circ \hh'\circ g(x_t)  }   \right ], 
 \end{align}
which is equivalent with the second part of the expression show in equation~9 in~\cite{ganin2016domain}.

Effectively, this formulation ignores the contribution of the source classifier $\hat h'$. 
In fact, \textit{it assumes the output of the source classifier is always constant} (e.g $\hat h=e_1$).
Notice that this is corrected in $f$-DAL where $\hat{\ell}(a, b) = g(b_{\argmax \, a})$. 
We experimentally also observed that this formulation leads to an inferior performance.
Nonetheless, the following proposition shows that under the assumption of an optimal domain classifier $\hat h'$, $d_{s,t}$ achieves JS-divergence (up to a constant shift), which upper bounds the $\JShHdiscrepancy$. 

\begin{prop} %
Suppose $d_{s,t}$ follows the form of eq.~\ref{DANN_Discrepancy} and $\hat h’$ is the optimal domain classifier which is unconstrained, then $\max_{\hat h'} d_{s,t}  = \textrm{D}_{\textrm{JS}}(S||T) - 2\log2$. 
\end{prop}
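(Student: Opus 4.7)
The plan is to follow the classical Goodfellow-style GAN argument adapted to this setting. First I would introduce the change of variable $D(x) := \sigma(\hat h' \circ g(x))$ and note that, because $\sigma:\mathbb{R}\to(0,1)$ is a bijection and $\hat h'$ is assumed to be unconstrained (equivalently, $\hat h' \circ g$ can realize any measurable function on $\domainin$), optimizing over $\hat h'$ is equivalent to optimizing over all measurable $D:\domainin\to(0,1)$. With this substitution,
\[
d_{s,t} = \int p_s(x)\log D(x)\,dx + \int p_t(x)\log(1-D(x))\,dx.
\]

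Second, I would perform a pointwise maximization. For each fixed $x$, the integrand $p_s(x)\log u + p_t(x)\log(1-u)$ as a function of $u\in(0,1)$ is strictly concave and has a unique maximizer at $u = p_s(x)/(p_s(x)+p_t(x))$. Since this maximizer is itself a measurable function of $x$, there is no gap between pointwise optimization and the functional supremum, so
\[
D^*(x) = \frac{p_s(x)}{p_s(x)+p_t(x)}.
\]

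Third, plugging $D^*$ back in and writing $M := (p_s+p_t)/2$, the resulting expression is
\[
\max_{\hat h'} d_{s,t} = \int p_s \log \tfrac{p_s}{p_s+p_t} + p_t \log \tfrac{p_t}{p_s+p_t}\,dx
= \int p_s \log \tfrac{p_s}{M} + p_t \log \tfrac{p_t}{M}\,dx \;-\; 2\log 2,
\]
using $\int(p_s+p_t)\,dx = 2$. The remaining step is to identify the integral with the paper's JS divergence. Unpacking Definition~\ref{def:fdivergence} with $\phi(x) = -(x+1)\log\frac{1+x}{2} + x\log x$ from Table~\ref{tbl:choices_f_diver_supp} and simplifying yields exactly $\mathrm{KL}(p_s\|M) + \mathrm{KL}(p_t\|M) = D_{\mathrm{JS}}(S\|T)$, giving the claim.

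The main (and really only) obstacle is the bookkeeping of the JS-divergence constant: the paper's convention via $\phi$ differs from the more common $\tfrac{1}{2}(\mathrm{KL}(p_s\|M)+\mathrm{KL}(p_t\|M))$ by a factor of two, and the entire $-2\log 2$ shift hinges on using the paper's own normalization. Verifying this by direct computation from the $\phi$ in Table~\ref{tbl:choices_f_diver_supp} is what makes the constant come out exactly as stated. A secondary mild subtlety is the ``unconstrained'' assumption: one should be explicit that the supremum is over measurable $\hat h'$ so that the pointwise optimum $D^*$ is attainable; otherwise one would only obtain an upper bound.
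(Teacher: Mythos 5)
Your proof is correct and follows essentially the same route as the paper's: the Goodfellow-style pointwise maximization yielding $D^*(x)=p_s(x)/(p_s(x)+p_t(x))$, followed by substituting back and identifying the result with the paper's (unnormalized) JS divergence $\mathrm{KL}(p_s\|M)+\mathrm{KL}(p_t\|M)$ minus $2\log 2$. Your extra care about the attainability of the pointwise optimum and the normalization convention behind the $-2\log 2$ constant is a welcome sharpening of the paper's terser argument, but not a different approach.
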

\begin{proof}
For simplicity in the notation let $\hat h ':= \sigmoid \circ (e_1 \cdot \hat h')$, rewritting eq.~\ref{DANN_Discrepancy} we have:
\begin{align}
d_{s,t}(\hh',g)&= \int_{\mathcal{Z}}  \pzsdensity(z) \log \hh'(z) +  \pztdensity(z) \log (1- \hh'(z)) dz.
\end{align}
By taking derivatives and finding the optimal $\hh^{*}(z)$, we get : 
$
h^{*}(z)=\frac{\pzsdensity (z)}{\pzsdensity (z)+\pztdensity(z)}$.

By plugging $ \hh^{*}(z)$ into \eqref{DANN_Discrepancy}, rearranging, and using the definition of the Jensen-Shanon (JS) divergence,  we get the desired result. 
\end{proof}

It is worth noting that the additional negative constant $-2\log 2$ does not affect the optimization.

\subsection{MDD formulation and $\g$-weighted JS divergence}

Now let us demonstrate how our $f$-DAL framework incorporates MDD naturally. Suppose $\phi^*(t) = -\frac{1}{\g}\log(1 - e^t)$ and $\hat{\ell}(\hh(z), \hh'(z)) = \log \hh'(z)_{\argmax \, \hh(z)}$. We retrieve the following result as in \citet{zhang19bridging}:

\begin{prop}[\textbf{\citet{zhang19bridging}}]
Suppose $d_{s,t}$ takes the form of MDD, i.e, 
\be\label{eq:dst_zhang}
\g d_{s,t} = \g \E_{z\sim \pzsdensity} \log \hh'(z)_{\argmax \, \hh(z)} + \E_{z\sim \pztdensity} \hh(z)\cdot \log ( 1 - \hh'(z)_{\argmax \, \hh(z)}).
\en
With unconstrained function class $\hat\Hc$, the optimal $d_{s,t}$ satisfies:
\be\label{eq:max_mdd}
\max_{\hh'} \g d_{s,t} = (\g + 1) {\rm JS}_{\g}(\pzsdensity\| \pztdensity) + \g \log \g - (\g + 1) \log (\g + 1),
\en
where ${\rm JS}_{\g}(\pzsdensity\| \pztdensity)$ is $\gamma$-weighted Jensen--Shannon divergence \citep{huszar2015not, lee2016fgan}:
\be\label{eq:mdd_dst}
{\rm JS}_{\g}(\pzsdensity\| \pztdensity) = \frac{\g}{\g + 1}{\rm KL}(\pzsdensity\| \frac{\g \pzsdensity + \pztdensity}{\g + 1}) + \frac{1}{\g + 1} {\rm KL}(\pztdensity\| \frac{\g \pzsdensity + \pztdensity}{\g + 1}).
\en
\end{prop}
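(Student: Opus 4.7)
The plan is to reduce the optimization over the unconstrained $\hh'\in \hat{\Hc}$ to a pointwise scalar maximization, solve the first-order condition explicitly, substitute the optimum back into the objective, and then algebraically rearrange the result to exhibit the $\g$-weighted Jensen--Shannon divergence plus the claimed additive constant.

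First, I would fix $\hh$ and, for each $z\in \mathcal{Z}$, let $k(z):=\argmax_k \hh(z)_k$ and introduce the shorthand $q(z):=\hh'(z)_{k(z)}\in(0,1)$. Since $\hat{\Hc}$ is unconstrained, the only coordinate of $\hh'(z)$ that enters $\g d_{s,t}$ at the point $z$ is $q(z)$, and $q$ can be chosen independently at every $z$. Writing $p_s:=\pzsdensity(z)$, $p_t:=\pztdensity(z)$, the integrand to be maximized at $z$ is $\g p_s\,\log q + p_t\,\log(1-q)$. (If one interprets the $\hh(z)\cdot\log(1-\hh'(z)_{\argmax\hh(z)})$ notation as the softmax vector $\hh(z)$ dotted with a constant vector of $\log(1-q(z))$, this collapses to the same scalar expression since $\hh(z)$ sums to one.)

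Next, setting the derivative in $q$ to zero gives $\g p_s/q = p_t/(1-q)$, so
\begin{equation*}
q^*(z) \;=\; \frac{\g\, \pzsdensity(z)}{\g\,\pzsdensity(z)+\pztdensity(z)}.
\end{equation*}
Plugging $q^*$ back in, I obtain
\begin{equation*}
\max_{\hh'}\g d_{s,t} \;=\; \int\!\Bigl[\,\g \pzsdensity\log\tfrac{\g \pzsdensity}{\g \pzsdensity+\pztdensity}+\pztdensity\log\tfrac{\pztdensity}{\g \pzsdensity+\pztdensity}\Bigr]dz.
\end{equation*}

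Finally, to match \eqref{eq:max_mdd}, I would factor $\g\log\g$ out of the first term and rewrite $\g \pzsdensity + \pztdensity = (\g+1)\cdot\frac{\g \pzsdensity+\pztdensity}{\g+1}$, producing $\log(\g+1)$ corrections inside both integrals. Recognizing $\g\int \pzsdensity\log\!\frac{(\g+1)\pzsdensity}{\g \pzsdensity+\pztdensity}\,dz$ and $\int \pztdensity\log\!\frac{(\g+1)\pztdensity}{\g \pzsdensity+\pztdensity}\,dz$ as $\g\,\mathrm{KL}\!\bigl(\pzsdensity\|\tfrac{\g \pzsdensity+\pztdensity}{\g+1}\bigr)$ and $\mathrm{KL}\!\bigl(\pztdensity\|\tfrac{\g \pzsdensity+\pztdensity}{\g+1}\bigr)$, and summing according to definition \eqref{eq:mdd_dst}, yields $(\g+1)\,{\rm JS}_\g(\pzsdensity\|\pztdensity)$. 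Collecting the leftover additive terms gives $\g\log\g-(\g+1)\log(\g+1)$, which completes the identity.

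The only real subtlety, and the step I expect to require the most care, is the reduction to pointwise maximization: one has to argue that the argmax coordinate $k(z)$ is fixed (it depends only on $\hh$, not on $\hh'$), and that the unconstrained nature of $\hat{\Hc}$ lets us pick $\hh'(z)_{k(z)}$ independently at each $z$. Once that is in place, the remainder is a first-year calculus optimization followed by a bookkeeping exercise to put the answer in the form of the $\g$-weighted JS divergence.
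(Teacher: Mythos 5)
Your proposal is correct and follows essentially the same route the paper uses: the paper cites \citet{zhang19bridging} for this proposition, but its own proof of the analogous $\gamma=1$ (DANN/JS) proposition in the same appendix section proceeds exactly as you do --- pointwise maximization over the unconstrained discriminator, first-order condition giving $q^*(z)=\g \pzsdensity(z)/(\g \pzsdensity(z)+\pztdensity(z))$, and substitution back followed by rearrangement into the ($\gamma$-weighted) JS divergence plus the constant. Your computation of the optimum and of the additive constant $\g\log\g-(\g+1)\log(\g+1)$ checks out.
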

We remark that when $\g = 1$, ${\rm JS}_{\g}(\pzsdensity\| \pztdensity)$ is the original Jensen--Shannon divergence. One should also note the the additional negative constant $\g \log \g - (\g + 1) \log (\g + 1)$, which attributes to the negativity of MDD, does not affect the optimization. 

$\phi^*(t) = -\frac{1}{\g}\log(1 - e^t)$ can be considered by rescaling the $\phi^*$ for the usual JS divergence (see \Cref{tbl:choices_f_diver_supp}). In general we can rescale $\phi^*$ for any $f$-divergence with the following lemma:

\begin{restatable}[\citet{boyd2004convex}]{lem}{}\label{lem:perspective}
For any $\lambda > 0$, the Fenchel conjugate of $\lambda \phi$ is $(\lambda \phi)^*(t) = \lambda \phi^*(t/\lambda)$, with $\dom (\lambda\phi)^* = \lambda \dom \phi^*$. 
\end{restatable}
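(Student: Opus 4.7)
The plan is to prove this directly from the definition of the Fenchel conjugate, using the fact that $\lambda > 0$ allows us to factor it out of both the objective and the supremum without flipping any inequalities or changing the feasible set.

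First I would unfold the definition: by the definition of Fenchel conjugate recalled in \eqref{eq:conjugate}, we have
\begin{equation*}
(\lambda \phi)^*(t) = \sup_{x \in \dom \phi} \bigl\{ \langle x, t \rangle - \lambda \phi(x) \bigr\}.
\end{equation*}
Since $\lambda > 0$, the domain of $\lambda \phi$ equals $\dom \phi$, so the supremum is over the same set. Next I would factor out $\lambda$ from the objective, writing $\langle x, t \rangle - \lambda \phi(x) = \lambda \bigl( \langle x, t/\lambda \rangle - \phi(x) \bigr)$, and then pull the positive constant $\lambda$ out of the supremum to get
\begin{equation*}
(\lambda \phi)^*(t) = \lambda \sup_{x \in \dom \phi} \bigl\{ \langle x, t/\lambda \rangle - \phi(x) \bigr\} = \lambda \phi^*(t/\lambda),
\end{equation*}
which is the claimed identity.

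For the domain statement, I would observe that $t \in \dom (\lambda \phi)^*$ iff $(\lambda \phi)^*(t) < \infty$, and by the identity just derived this is equivalent to $\phi^*(t/\lambda) < \infty$, i.e.\ $t/\lambda \in \dom \phi^*$, which is the same as $t \in \lambda \dom \phi^*$. This gives $\dom (\lambda \phi)^* = \lambda \dom \phi^*$.

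There is essentially no obstacle here; the only subtle point is being careful that $\lambda > 0$ (rather than merely nonzero), since this is what keeps the supremum a supremum after factoring, preserves $\dom \phi$ as the feasible set for $\lambda \phi$, and lets us divide by $\lambda$ in the argument of $\phi^*$. Hence the lemma follows immediately from the definition without invoking any deeper convex-analytic machinery.
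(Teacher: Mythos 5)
Your proof is correct: the paper itself gives no proof of this lemma, simply citing \citet{boyd2004convex}, and your direct computation from the definition of the conjugate (factoring the positive constant $\lambda$ out of the supremum and reading off the domain from finiteness of the conjugate) is exactly the standard argument found there. Nothing is missing.
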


\subsection{ Revisiting MCD \cite{saito2018maximum} }
Let's now use $f$-DAL to revisit MCD. This will allow us to understand the cause of the performance gap. 
For example,  MCD(86.5) vs Ours (89.5) on Office-31. Moreover, it will show us how to improve MCD.
Let $\hat \ell(c,b)=|c-b|$ in \Cref{eqn:practical_minimax_objective}, and choose $\phi$ to be the TV (\Cref{tbl:choices_f_diver}). We have:

\begin{align}
\min_{\hat h \in \hat \hypot, g \in \mathcal{G}} \max_{ \hat h' \in \hat \hypot}   \ R_s [ \hat h\circ g]  
+    \E_{ \psdensity } [| \hat h' \circ g - \hat h\circ g | ] - \E_{ \ptdensity } [ | \hat h'\circ g -\hat h\circ g |]  %
\end{align}

where $\hat \ell$
should be in $[-0.5,0.5]$ to satisfy requirements on 
$\phi^*$ (\Cref{tbl:choices_f_diver}).
Comparing this  with MCD
we can see \textbf{3} key  differences. \textbf{1)} MCD ignores the second term based on assumptions, further requires careful initialization for  $\hat h,\hat h'$.
 \textbf{2)} The max operator in their case  {goes over $\hat h$ and $\hat h'$. 
 This makes optimization harder (see \citet{zhang19bridging}).
We do not need this because our bounds are based on $D^\phi_{h,\hypot}\leq D^\phi_{\hypot}$ (definitions \ref{def:disc1} and \ref{def:disc2}, \Cref{lem:upper_lower_bound_fdiv}). \textbf{3)} The restriction on the $\hat \ell(c,b)$ is not taken into account (should be re-weighted or the act. function  follow Tab 1). 
As mentioned in MCD (Eq. 9), $I[c\neq b]$ is similar, but in this context not the same as $|c-b|$. 
 Thus, 1,2,3 could explain the difference in performance  86.5 vs Ours (89.5).
 We believe 
 using these recommendations on MCD could lead to a powerful algorithm but we defer that to further work.

\section{Additional Experimental Results}\label{app:add_experimental_results}
\begin{table*}[h] %
  \addtolength{\tabcolsep}{2pt}
  \centering
  \vspace{-3mm}  
  \centering\caption{Accuracy represented in (\%) with average and standard deviation on the {Office-31} benchmark.} 
  \label{tbl:sota_office31_additional_results}
  \resizebox{\textwidth}{!}{%
    \begin{tabular}{lccccccc}
      \toprule
      Method                          & A $\rightarrow$ W     & D $\rightarrow$ W     & W $\rightarrow$ D     & A $\rightarrow$ D     & D $\rightarrow$ A     & W $\rightarrow$ A     & Avg           \\
      \midrule
      ResNet-50 \citep{he2016deep}   & 68.4$\pm$0.2          & 96.7$\pm$0.1          & 99.3$\pm$0.1          & 68.9$\pm$0.2          & 62.5$\pm$0.3          & 60.7$\pm$0.3          & 76.1          \\
      DANN \citep{ganin2016domain}   & 82.0$\pm$0.4          & 96.9$\pm$0.2          & 99.1$\pm$0.1          & 79.7$\pm$0.4          & 68.2$\pm$0.4          & 67.4$\pm$0.5          & 82.2          \\
      JAN \citep{long2017deep}         & 85.4$\pm$0.3          & {97.4}$\pm$0.2        & {99.8}$\pm$0.2        & 84.7$\pm$0.3          & 68.6$\pm$0.3          & 70.0$\pm$0.4          & 84.3          \\
      GTA \citep{sankaranarayanan2018generate}       & 89.5$\pm$0.5          & 97.9$\pm$0.3          & 99.8$\pm$0.4          & 87.7$\pm$0.5          & 72.8$\pm$0.3          & 71.4$\pm$0.4          & 86.5          \\
      MCD  \citep{saito2018maximum}   &88.6$\pm$0.2&98.5$\pm$0.1&{100.0}$\pm$.0&92.2$\pm$0.2&69.5$\pm$0.1&69.7$\pm$0.3&86.5 \\
      CDAN  \cite{long2018conditional}     & 94.1$\pm$0.1          & {98.6}$\pm$0.1 & {100.0}$\pm$.0 & 92.9$\pm$0.2          & 71.0$\pm$0.3          & 69.3$\pm$0.3          & 87.7          \\
      \midrule

       $f$-DAL ($\gamma$-JS) / MDD  \citep{zhang19bridging}   &  {94.5}$\pm$0.3 & 98.4$\pm$0.1          &  {100.0}$\pm$.0 &  {93.5}$\pm$0.2 &  {74.6}$\pm$0.3 &  {72.2}$\pm$0.1 & {88.9} \\
       $f$-DAL (JS)  & {93.0}$\pm$1.4   &  {98.8}$\pm$0.1  &  {100.0}$\pm$.0 &92.8$\pm$0.4 & {74.9}$\pm$1.5 & 73.3$\pm$0.1   & 88.8  \\

        $f$-DAL (Pearson $\chi^2$)   & {95.4}$\pm$0.7  &   {98.4}$\pm$0.2          & {100.0}$\pm$.0 &  {93.8}$\pm$0.4 & {73.5}$\pm$1.1 &  {74.2}$\pm$0.5 & \textbf{89.2}    \\
 \midrule
        $f$-DAL($\gamma$-JS) / MDD + Alignment \citep{pmlr-v119-jiang20d} & 90.3$\pm$0.2 & 98.7$\pm$0.1 & 99.8$\pm$.0  & 92.1$\pm$0.5 & 75.3$\pm$0.2 & 74.9$\pm$0.3 & 88.8 \\
        $f$-DAL (Pearson $\chi^2$) + Alignment   & 93.4$\pm$0.4  &   {99.0}$\pm$0.1           & {100.0}$\pm$.0 &  {94.8}$\pm$0.6 & {73.6}$\pm$0.2 &  {74.6}$\pm$0.4 & \textbf{89.2}    \\
      \midrule
      \bottomrule
    \end{tabular}

}

\end{table*}

  \begin{table*}[ht]
    \vspace{-6mm}
    \caption{Accuracy (\%) on the {Office-Home} benchmark.}
    \label{tbl:sota_officehome_additional}

    \resizebox{\textwidth}{!}{%
    \begin{tabular}{lcccccccccccccc} %
      \toprule
      Method   & Ar$\to$Cl & Ar$\to$Pr & Ar$\to$Rw & Cl$\to$Ar & Cl$\to$Pr  & Cl$\to$Rw & Pr$\to$Ar %
      & Pr$\to$Cl & Pr$\to$Rw & Rw$\to$Ar & Rw$\to$Cl & Rw$\to$Pr & Avg           \\
      \midrule
      ResNet-50 \citep{he2016deep}    & 34.9                   & 50.0                   & 58.0                   & 37.4                   & 41.9                    & 46.2                   & 38.5 
                        & 31.2                   & 60.4                   & 53.9                   & 41.2                   & 59.9                  
       & 46.1          
      \\
      DANN \citep{ganin2016domain}   & 45.6                   & 59.3                   & 70.1                   & 47.0                   & 58.5   
                       & 60.9                   & 46.1                    & 43.7                   & 68.5                   & 63.2                   & 51.8                   & 76.8                   
      & 57.6         
       \\
      JAN \citep{long2017deep}         & 45.9                   & 61.2                   & 68.9                   & 50.4                   & 59.7  
                       & 61.0                   & 45.8                    & 43.4                   & 70.3                   & 63.9                   & 52.4                   & 76.8                   
      & 58.3          
      \\
      CDAN \citep{long2018conditional}     & 50.7                   & 70.6                   & 76.0                   & 57.6                   & 70.0      
                    & 70.0                   & 57.4                    & 50.9                   & 77.3                   & 70.9                   & 56.7                   & 81.6                  
                    & 65.8         
       \\
       \midrule
        $f$-DAL ($\gamma$-JS) / MDD \citep{zhang19bridging}              & {54.9}          &   {73.7}          &  {77.8}          &  {60.0}          &  {71.4}   
                &  {71.8}          &  {61.2}           &  {53.6}          &  {78.1}          &  {72.5}          &  {60.2}          & {82.3}          
              & {68.1}

      \\
      $f$-DAL (JS)   & 53.7&71.7  &76.3  & 60.2  & 68.4  & 69.0  & 60.2   & 52.6    & 76.9    & 71.4    & 59.0  & 81.8  & 66.8 &  \\

         $f$-DAL (Pearson $\chi^2$)   &54.7 & 69.4& 77.8&  {61.0}&  72.6& 72.2 &  60.8 &  53.4 &  80.0 &  73.3 &  60.6 &  {83.8}& 68.3  \\

       \midrule
       $f$-DAL($\gamma$-JS) / MDD + Alignment \citep{pmlr-v119-jiang20d} & 56.2   & 77.9  & 79.2   & 64.4  &  73.1  & 74.4   & 64.2 &   54.2 & 79.9 & 71.2 & 58.1 & 83.1 & 69.5\\
       $f$-DAL (Pearson $\chi^2$) + Alignment &  56.7 & 77.0 &  81.1 &  63.1&  72.2&  75.9 & 64.5 &  54.4  & 81.0 & 72.3 &  58.4 &  83.7 & \textbf{70.0}    \\

      \bottomrule
    \end{tabular}%
  }
  
\end{table*}

\begin{table*}[h!]
  \centering
  \vspace{-6mm}
  \caption{Accuracy on the Amazon Reviews data sets}
  \resizebox{\textwidth}{!}{%
    \begin{tabular}{lrrrrrrrrrrrrr}
      \toprule
         Method & \multicolumn{1}{l}{B$\to$D} & \multicolumn{1}{l}{B$\to$E} & \multicolumn{1}{l}{B$\to$K} & \multicolumn{1}{l}{D$\to$B} & \multicolumn{1}{l}{D$\to$E} & \multicolumn{1}{l}{D$\to$K} & \multicolumn{1}{l}{E$\to$B} & \multicolumn{1}{l}{E$\to$D} & \multicolumn{1}{l}{E$\to$K} & \multicolumn{1}{l}{K$\to$B} & \multicolumn{1}{l}{K$\to$D} & \multicolumn{1}{l}{K$\to$E} & \multicolumn{1}{l}{Avg} \\
          \midrule
    JDOTNN \cite{courty2017joint} & 79.5  & 78.1  & 79.4  & 76.3  & 78.8  & 82.1  & 74.9  & 73.7  & 87.2  & 72.8  & 76.5  & 84.5  & 78.7 \\
    MADAOT \cite{pmlr-v119-dhouib20b} & 82.4  & 75    & 80.4  & 80.9  & 73.5  & 81.5  & 77.2  & 78.1  & 88.1  & 75.6  & 75.9  & 87.1  & 79.6 \\
   \midrule
    DANN \cite{pmlr-v119-dhouib20b,ganin2016domain}  & 80.6  & 74.7  & 76.7  & 74.7  & 73.8  & 76.5  & 71.8  & 72.6  & 85.0    & 71.8  & 73.0    & 84.7  & 76.3 \\
    \midrule
    $f$-DAL (JS) & 83.2 & 78.8 &  80.4 & 80.2 &  79.4 &  82.9 &  72.3  & 76.3 &  87.8  & 74.7 &  78.5 &  87.0 & 80.1  \\
    $f$-DAL (Pearson $\chi^2$) & {84.0}  & {80.9}  & {81.4}  & {80.6}  & {81.8}  & {83.9}  & {76.7}  & {78.3}  & {87.9}  & {76.5}  & {79.5}  & {87.5}  & \textbf{81.6} \\
    \midrule
    \bottomrule
    \end{tabular}%
  \label{tab:amazon_nlp_shallownet}%
}
\end{table*}%

\begin{table}[h!]
  \centering
  \caption{Accuracy on the Digits datasets}
    
    \begin{tabular}{cccc}
    \toprule
     Method     & M$\to$U  & U$\to$M  & Avg \\
      \midrule
   
    DANN \cite{ganin2016domain}  & 91.8  & 94.7  & 93.3 \\
    CDAN \cite{long2018conditional}  & 93.9  & 96.9  & 95.4 \\
    \midrule
    $f$-DAL (JS) &   {95.3}     &  {98.0}      &  \textbf{96.6}  \\
    $f$-DAL (Pearson $\chi^2$) & {95.3} &  {97.3} &  {96.3} \\
    \midrule
    \bottomrule
    \end{tabular}%
  \label{tab:addlabel}%
\end{table}%

\begin{table}[htbp]
  \centering
  \caption{p-values Significance Test (Wilcoxon signed rank test)}
    \begin{tabular}{lcccc}
      \toprule
          & \multicolumn{1}{l}{Digits} & \multicolumn{1}{l}{NLP} & \multicolumn{1}{l}{Office-31} & \multicolumn{1}{l}{Office-Home} \\
    \midrule
    Avg DANN & 93.3  & 76.3  & 82.2  & 57.6 \\
    Avg $f$-DAL JS & 96.6  & 80.1  & 88.8  & 66.8 \\
    p-val & 0.5   & 0.0025  & 0.031  & 0.0025  \\
    \midrule
    \bottomrule
    \end{tabular}%
  \label{tbl:significancetestfdaljs}%
\end{table}%

\subsection{Experimental results with others $\gamma$-shifted divergences }
In this section, we show experiments on the  Digits Benchmark (Avg on 3 runs) for a shifted $\gamma$-Pearson $\chi^2$. 
We follow \Cref{sec:gamma_weighted_div} and let $\hat \phi(x)=\phi(x)-\gamma x$. 
Results shown in \Cref{tab:shiftedPearson} are similar to those obtained for the $\gamma$-JS (\Cref{tab:gamma_w_analysis}), for which our test showed no significance to have $\gamma$.  We also conducted experiments for the other modality, e.g. NLP data, with  $\gamma$-JS. Similarly, we observed results are not significant  wrt JS($\gamma$=3, Avg=80.4) and slightly worse than Pearson.

\begin{table}[h]
  \centering
  \caption{$\gamma$-shifted Pearson $\chi^2$ Digits Benchmark. }
    \begin{tabular}{rr}
    \toprule
    \multicolumn{1}{l}{$\gamma$} & \multicolumn{1}{l}{Avg Digits} \\
     \midrule
     -  & \textit{96.3} \\
     \midrule
    2     & 96.2 \\
    3     & \textbf{96.4} \\
    4     & 96.3 \\
    \midrule
    \bottomrule
    \end{tabular}%
  \label{tab:shiftedPearson}%
\end{table}%

\begin{figure}[h]
    \centering
    \includegraphics[width=0.25\textwidth,trim=80 80 100 85, clip]{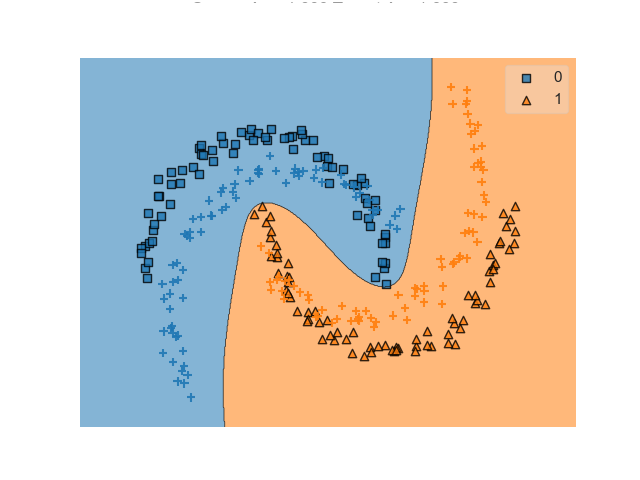}
    \vspace{-3mm}
    \caption{\small \textit{Domain Adaptation}.  A learner trained on  abundant labeled data (marked as squares, colors are categories) is expected to perform well in the target domain (marked as \textbf{+}). Decision boundaries correspond to a 2-layers neural net trained using $f$-DAL.}
    \label{fig:concept_figure}
\end{figure}

\vspace{-3mm}
\subsection{Robustness to Label Shift} \label{app:label_shift}

In this section, we compare the robustness to label-shift of $f$-DAL-JS vs DANN on the digits benchmark. 
Specifically, we consider the task M$\to$ U and artificially generate different version of the target dataset where data-points are re-sampled in terms of its classes. This way we can have control over the JS divergence between the label distribution (i.e $JS(P_s(y)||P_t(y))$) and compare at different levels.
\Cref{fig:exp_label_shift} shows the results.
Firstly, we can observe that both methods performance degrades as the distance between label distributions increases.
This is an expected behavior in DA, and can also be explained with our theory. 
For example, as this distance increases, the term $\lambda^*$ in \Cref{thm:general_bound} simply increases, and thus this cannot be assumed to be negligible.
To explicitly see  why, we refer the reader to \citet{zhao2019learning} where the authors derived a lower bound for joint risk. 
It is important to also have in mind that $\lambda^*$ incorporates the notion of adaptability.
That is, if the optimal hypothesis performs poorly in either domain, adaptation is simply not possible and thus assumptions are need it.
Secondly, from the figure, we can also see our method is more robust to label-shift than DANN. 
Indeed, we fit linear regression models to highlight the trend and show the value of the slope in each case.
The performance comparison is noticeable. 
We emphasize the aim of this experiment is to showcase the robustness of $f$-DAL-JS vs DANN when label-shift exists.
Our method does not propose any additional correction or term to deal with this and doing so (i.e dealing explicitly with label-shift) is out-of-the-scope of this work.
Our algorithm follows the common assumption stated on adversarial DA methods  and let $\lambda^*$ to be negligible.
We believe the better performance of $f$-DAL-JS vs DANN under label-shift is just a consequence of directly connecting theory and algorithm.
We additionally show $f$-DAL can be perfectly combined with methods that deal with label shift such as Implicit Alignment (i.e \citet{pmlr-v119-jiang20d}) (\Cref{tbl:sota_office31_additional_results,tbl:sota_officehome_additional}).
Indeed, doing so leads to SoTA results on the Office-Home dataset (\Cref{tbl:sota_officehome_additional}).
This again showcases the versatility of $f$-DAL. 

\begin{figure}[!h]
    \centering
    \includegraphics[width=0.5\textwidth]{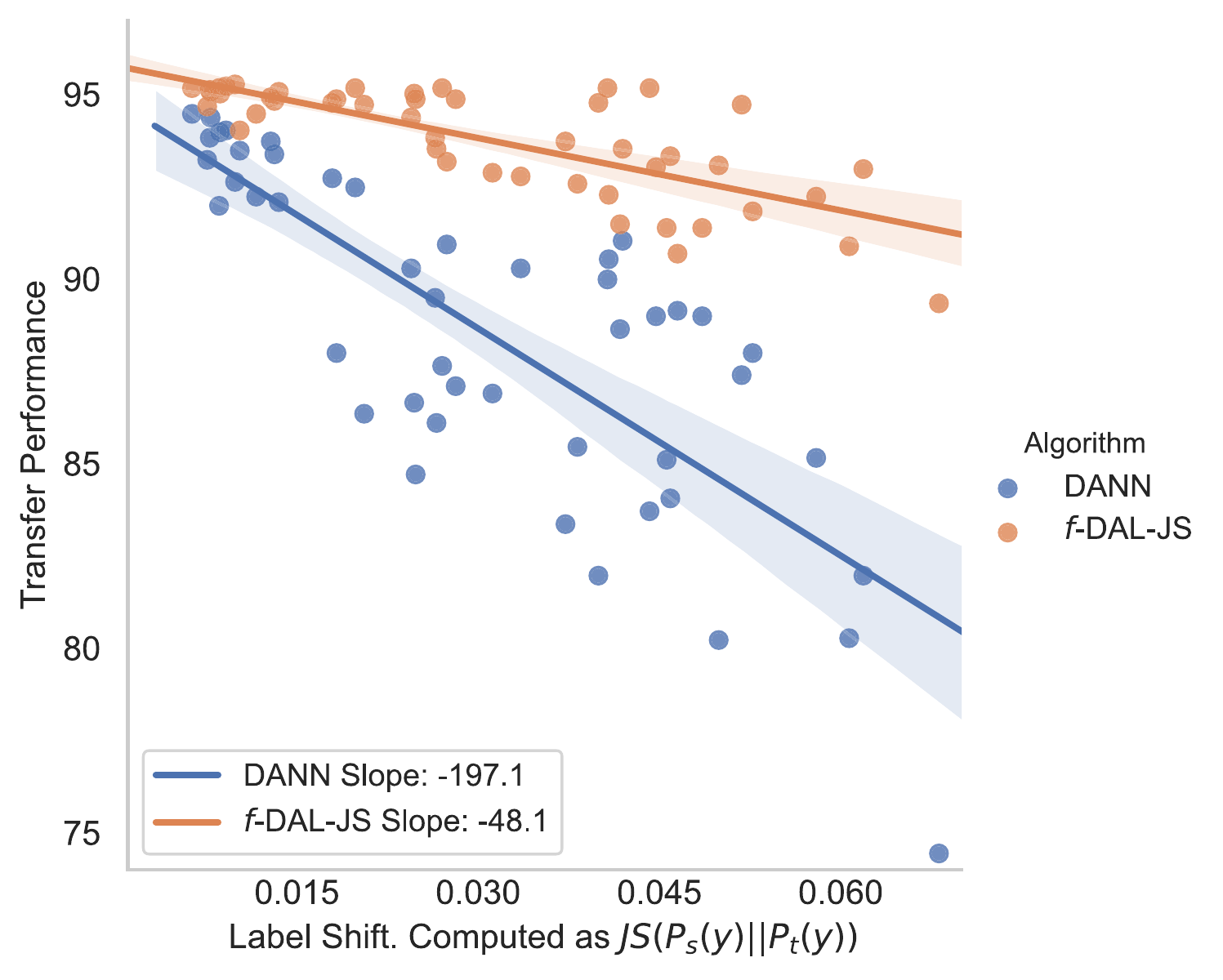}
    \caption{\small \textit{Robustness to Label Shift $f$-DAL-JS vs DANN}. 
    The x-axis represents the Jensen-Shanon distance between the label distributions.
    We can observe that $f$-DAL-JS is more robust to label shift than DANN.
    Linear regression models are fit to highlight the trend(slope is also shown).
     (Dataset M $\to$ U).}
    \label{fig:exp_label_shift}
    \vspace{-4mm}
\end{figure}

\newpage
\section{More Details on Experimental Setup} \label{app:more_details_exp_setup}

Our algorithm is implemented in PyTorch.
For the Digits datasets, the implementation details follows \citet{long2018conditional}. 
Thus, the backbone network is LeNet \cite{lecun1998gradient}. The main classifier ($\hat h$) and auxiliary classifier ($\hat h'$)  are both $2$ linear layers  with Relu non-linearities and Dropout (0.5) in the last layer. We train for 30 epochs, the optimizer  is SGD with Nesterov Momentum (momentum 0.9, batch size 128), the learning rate is 0.01. The regularization term for the discrepancy is set to 0.5 and the GRL coefficient set to 0.6.  We use a weight decay coefficient of 0.002.
Hyperparameters follow  closely the ones used by \citet{long2018conditional}, if some differ slightly, they were determined in a subset(10\%) of the training set of the task M$\to$U and kept constant for the other task. We use three different seeds (i.e 1,2,3) and report the average over the runs.

For the NLP task, we follow the standard protocol from \citet{courty2017joint,ganin2016domain} and use simple 2-layer model with sigmoid activation function. Thus, the main classifier ($\hat h$) and auxiliary classifier ($\hat h'$) are a simple linear layer with BN.
We train for 10 epochs, the optimizer  is SGD with Nesterov Momentum (momentum 0.9, batch size 16), the learning rate is 0.001. We use three different seeds (i.e 1,2,3) and report the average over the runs. The regularization term for the discrepancy is set to 1 and the GRL coefficient set to 0.1.  We use a weight decay coefficient of 0.002.
Hyper-parameters are empirically determined in a subset(10\%) of the training set of the task  (B$\to D$ ) and kept constant for the others.

For the visual datasets, we use ResNet-50 \citep{he2016deep} pretrained on ImageNet \citep{deng2009imagenet} as the backbone network.
The main classifier ($\hat h$) and auxiliary classifier ($\hat h'$) are both $2$ layers neural nets with Leaky-Relu activation functions. We use spectral normalization (SN) as in \cite{miyato2018spectral} only for these two (i.e $\hat h$ and $\hat h'$ ). 
We did not see any transfer improvement by using it.
The reason for this was to avoid gradient issues and instabilities during training for some divergences in the first epochs. 
We use the hyperparams and same training protocol from MDD (\citet{zhang19bridging} and CDAN (\citet{long2018conditional}). We report the average accuracies over 3 experiments.  

Experiments are conducted on NVIDIA Titan V (Digits, NLP) and V100 (Visual Tasks) GPU cards.

\end{document}